\documentclass[letterpaper]{article}
\usepackage[utf8]{inputenc}
\usepackage{amsmath}
\usepackage{amsthm}
\usepackage{amssymb}
\usepackage{graphicx}
\usepackage[margin=1in,letterpaper]{geometry}
\usepackage{enumitem}
\usepackage{xcolor}
\usepackage[colorlinks=true,allcolors=blue]{hyperref}
\usepackage{subcaption}
\usepackage{algorithm}
\usepackage[authoryear]{natbib}
\bibliographystyle{abbrvnat}
\usepackage{comment}

\setcounter{totalnumber}{6}

\newtheorem{theorem}{Theorem}
\newtheorem{corollary}[theorem]{Corollary}
\newtheorem{lemma}[theorem]{Lemma}
\newtheorem*{remark}{Remark}

\DeclareMathOperator*{\argmin}{arg\,min}

\begin{document}

\title{Classification with Nearest Disjoint Centroids}
\author{Nicolas Fraiman \\  \href{mailto:fraiman@email.unc.edu}{fraiman@email.unc.edu} 
\and Zichao Li \\ 
\href{mailto:lizichao@live.unc.edu}{lizichao@live.unc.edu}}
\date{}
\maketitle

\begin{abstract}
In this paper, we develop a new classification method based on nearest centroid, and it is called the nearest disjoint centroid classifier. Our method differs from the nearest centroid classifier in the following two aspects: (1) the centroids are defined based on disjoint subsets of features instead of all the features, and (2) the distance is induced by the dimensionality-normalized norm instead of the Euclidean norm. We provide a few theoretical results regarding our method. In addition, we propose a simple algorithm based on adapted $k$-means clustering that can find the disjoint subsets of features used in our method, and extend the algorithm to perform feature selection. We evaluate and compare the performance of our method to other classification methods on both simulated data and real-world gene expression datasets. The results demonstrate that our method is able to outperform other competing classifiers by having smaller misclassification rates and/or using fewer features in various settings and situations.
\end{abstract}

\section{Introduction}\label{sec:intro}
In general, classification is the task of predicting the category that an observation belongs to. Many applications in real life are dealing with classification problems, such as determining the category of images \citep{russakovsky2015imagenet}, deciding the topic of documents \citep{lewis2004rcv1}, and diagnosing the cancer type of tissues \citep{edgar2002gene}. In order to perform classification, people often build models that can automatically make predictions by identifying patterns in a dataset that includes observations and their associated class labels. Over the years, researchers have developed many different classification methods, ranging from simpler ones such as logistic regression \citep{hastie2009elements}, $k$-nearest neighbors \citep{cover1967nearest}, Naive Bayes \citep{hand2001idiot}, and decision trees \citep{breiman1984classification}, to more complicated ones such as support vector machines \citep{cortes1995support}, random forest \citep{breiman2001random}, gradient boosting machine \citep{friedman2001greedy}, and neural networks \citep{krizhevsky2012imagenet}.

Among numerous existing classification methods, the nearest centroid classifier is one of the simplest classification method. It computes the centroid of each class as the average of the training samples that belong to that class, and classifies a test sample to the class with the nearest centroid. Intuitively, the nearest centroid classifier can be understood as creating one prototype to represent each class, and making predictions by selecting the class with the prototype that is most similar to the test sample. The theoretical simplicity makes it easy to understand and interpret, and the computational efficiency makes it appealing in practice. Therefore, it has been used in many different fields of application, including gene expression analysis \citep{tibshirani2002diagnosis, levner2005feature, dabney2005classification, dabney2007optimality} and text classification \citep{han2000centroid, lertnattee2004effect, tan2008improved}.

The nearest shrunken centroid classifier, a simple modification of the nearest centroid classifier, was proposed by \citet{tibshirani2002diagnosis}. It works by shrinking the class centroids toward the overall centroids after standardizing each feature by the pooled within-class standard deviation of that feature. Importantly, the shrinkage process can be considered as performing feature selection, which is desirable in applications with high-dimensional features. For example, when predicting the cancer type using a gene expression dataset, the nearest shrunken centroid classifier would select a small subset of the genes to make predictions, whereas the nearest centroid classifier would use all the genes. This characteristic makes the nearest shrunken centroid classifier a popular method in gene expression analysis \citep{tibshirani2003class, sorlie2003repeated, volinia2006microrna, parker2009supervised, curtis2012genomic}.

In this paper, we develop a new classification method based on nearest centroid, and it is called the nearest disjoint centroid classifier. The main idea is to define the centroids based on disjoint subsets of features instead of all the features. More specifically, we partition the features into $k$ groups, each one corresponding to one of the $k$ classes, and the centroid for each class is defined using the corresponding group of features. In order to find the $k$ disjoint subsets of features, we propose a simple algorithm based on adapted $k$-means clustering. A similar formulation was proposed in \citet{fraiman2020biclustering}, in which the authors used an alternating $k$-means algorithm with disjoint centroids to perform biclustering. However, our method applies to supervised classification problems rather than unsupervised biclustering problems. Importantly, this means that we assume there are $k$ disjoint subsets of features with discriminative power, which is generally true for high-dimensional data where the number of features $p$ is much larger than the number of classes $k$. In addition, our method is able to perform feature selection by adding a special cluster that represents a ``global'' baseline, and features assigned to the special cluster are not used in making predictions.

The rest of this paper is organized as follows. In Section \ref{sec:problem}, we formulate the problem and give a high-level description of our nearest disjoint centroid classifier. In Section \ref{sec:theory}, we present and prove a few theoretical results regarding our method. In Section \ref{sec:proof}, we provide a rigorous proof of the main consistency result. In Section \ref{sec:algorithm}, we propose a simple algorithm based on adapted $k$-means clustering that finds the disjoint subsets of features used in our method. In Section \ref{sec:selection}, we extend our method to perform feature selection by assigning features to a special cluster that is not used for classification. In Section \ref{sec:simulation}, we evaluate and compare the performance of our nearest disjoint centroid classifier on simulated data to other classification methods. In Section \ref{sec:applications}, we apply our method to three cancer gene expression datasets, and show that our method is able to outperform other competing classifiers by having smaller misclassification rates and/or using fewer features. In Section \ref{sec:discussion}, we conclude with a discussion.

\section{Problem Formulation}\label{sec:problem}
Suppose we are given $n$ training samples and their associated class labels $(X_1, Y_1), \ldots, (X_n, Y_n)$ where $X_i \in \mathbb{R}^p$ and $Y_i \in \{1, \ldots, k\}$ for $1 \le i \le n$. For $1 \le j \le k$, let $S_j$ denote the set of indices of training samples that belong to class $j$. Throughout this paper, we assume $k \le \min(n, p)$, and all $S_j$ are non-empty. The nearest centroid classifier works by first computing the per-class centroid $c_j$ as 
\begin{equation*}
    c_j = \frac{1}{|S_j|} \sum_{i \in S_j} X_i,\ 1 \le j \le k.
\end{equation*}
Then, it classifies a test sample $X$ to the class $Y$ with the nearest centroid. When using Euclidean distance, we have
\begin{equation*}
    Y = \argmin_{1 \le j \le k} ||X - c_j||_2^2.
\end{equation*}
Essentially, the centroids $c_j$ are all vectors in $\mathbb{R}^p$, and they are computed by minimizing the following objective function:
\begin{equation*}
    \sum_{j=1}^{k} \sum_{i \in S_j} ||X_i - c_j||_2^2.
\end{equation*}

Now, suppose the centroids are defined using disjoint subsets of features instead of all the features. More specifically, let $I = \{1, \ldots, p\}$ be the index set of features, then $I$ could be partitioned into $k$ disjoint nonempty sets $I_1, \ldots, I_k$, where $I_1 \cup \cdots \cup I_k = I$. For any $X = (x_1, \ldots, x_p) \in \mathbb{R}^p$, let $X(I_j) = (x_i)_{i \in I_j}$. The space of $X(I_j)$ is defined as $\mathbb{R}^{I_j}$, and we define the dimensionality-normalized norm on $\mathbb{R}^{I_j}$ as 
\begin{equation*}
    ||X(I_j)||_{dn} = \sqrt{\frac{\sum_{i \in I_j} x_i^2}{l_j}},
\end{equation*}
where $l_j = |I_j|$ denote the cardinality of the index set $I_j$, and it is also the dimension of the space $\mathbb{R}^{I_j}$.

In our method, we would like to find the $k$ disjoint subsets of features $I_1, \ldots, I_k$ and the corresponding $k$ disjoint centroids $c_j \in \mathbb{R}^{I_j}, 1 \le j \le k$ such that the following objective function is minimized:
\begin{equation}\label{ndc_objective}
    \sum_{j=1}^{k} \sum_{i \in S_j} ||X_i(I_j) - c_j||_{dn}^2.
\end{equation}
For a test sample $X$, we would classify it to the class $Y$ with the nearest disjoint centroid (distance induced by the dimensionality-normalized norm), which is given by
\begin{equation*}
    Y = \argmin_{1 \le j \le k} ||X(I_j) - c_j||_{dn}^2.
\end{equation*}

The reason of using the dimensionality-normalized norm instead of the Euclidean norm in the objective function (\ref{ndc_objective}) is twofold:
\begin{enumerate}
    \item Theoretically, it is important for each individual term $||X(I_j) - c_j||_{dn}^2$ to be appropriately normalized, so that the objective function (\ref{ndc_objective}) is summing up $n$ roughly comparable terms no matter how large or small each subset of features $I_j$ is. If we use the Euclidean norm in the objective function, then when the data is imbalanced (some classes have much more observations that other classes), the majority classes would be assigned much smaller subsets of features, and the minority classes would be assigned much larger subsets of features. This is because such assignment would minimize the objective function by minimizing the inner sum $\sum_{i \in S_j} ||X_i(I_j) - c_j||_{2}^2$ for each class $j$.
    \item Empirically, we conducted several simulations to compare the performance of our method using the dimensionality-normalized norm and the Euclidean norm. Although they are not included in the paper, the results confirmed our expectation that using Euclidean norm would lead to worse performance when the data is imbalanced.
\end{enumerate}

It is easy to see that if the $k$ disjoint subsets of features $I_1, \ldots, I_k$ are given, then the corresponding $k$ centroids can be computed as
\begin{equation*}
    c_j = \frac{1}{|S_j|} \sum_{i \in S_j} X_i(I_j),\ 1 \le j \le k.
\end{equation*}
However, finding the best disjoint subsets of features $I_1, \ldots, I_k$ to minimize the objective function (\ref{ndc_objective}) is a combinatorial optimization problem, which is computationally intractable for large $p$. In light of this fact, we present a simple algorithm in Section \ref{sec:algorithm} that uses adapted $k$-means clustering to find the disjoint subsets of features $I_1, \ldots, I_k$.

Again, we emphasize that our method assumes that there are $k$ disjoint subsets of features with discriminative power, which might not apply to all data, but it is generally true for high-dimensional data where the number of features $p$ is much larger than the number of classes $k$. In addition, it is possible to extend our method to handle the more general case by allowing the centroids to have a common set of features, which could be selected by running any feature selection algorithm. In that case, our main theoretical result Theorem \ref{thm:consistency} would still hold, and our main algorithms Algorithm \ref{algorithm1} and Algorithm \ref{algorithm2} would only need to be slightly modified.

\section{Theoretical Results}\label{sec:theory}
Suppose the training samples and their associated class labels $(X_1, Y_1), \ldots, (X_n, Y_n)$ are i.i.d.\ with the same distribution as $(X, Y)$ where $X \in \mathbb{R}^p$ and $Y \in \{1, \ldots, k\}$. Let $\mu$ denote the distribution of $(X, Y)$, and let $\mu_n$ denote the empirical distribution of the $n$ training samples and their associated class labels. In addition, suppose that for $1 \le j \le k$, the probability of $Y=j$ is given by $p^{(j)}$:
\begin{equation*}
    P(Y = j) = p^{(j)},
\end{equation*}
and the conditional distribution of $X$ given $Y=j$ is given by $\mu^{(j)}$:
\begin{equation*}
    X | Y=j \sim \mu^{(j)}.
\end{equation*}
Similarly, for $1 \le j \le k$, let $p_n^{(j)}$ denote the empirical proportion of $Y_i=j$, and let $\mu_n^{(j)}$ denote the empirical conditional distribution of $X_i$ given $Y_i=j$ in the training data.

We minimize the empirical risk defined as
\begin{align*}
    W(\mathbf{I}, \mathbf{c}, \mu_n) &= \frac{1}{n} \sum_{i=1}^{n} \sum_{j=1}^{k} \mathbf{1}_{\{Y_i = j\}} \cdot ||X_i(I_j) - c_j||_{dn}^2 \nonumber \\
    &= \sum_{j=1}^{k} p_n^{(j)} \int ||x(I_j) - c_j||_{dn}^2 d\mu_n^{(j)}(x)
\end{align*}
over all feature subsets $\mathbf{I} = \{I_1, \ldots, I_k\}$ and centroids $\mathbf{c} = \{c_1, \ldots, c_k\}$. The risk is defined as
\begin{align*}
    W(\mathbf{I}, \mathbf{c}, \mu) &= \int \sum_{j=1}^{k} \mathbf{1}_{\{y = j\}} \cdot ||x(I_j) - c_j||_{dn}^2 d\mu(x,y) \nonumber \\
    &= \sum_{j=1}^{k} p^{(j)} \int ||x(I_j) - c_j||_{dn}^2 d\mu^{(j)}(x).
\end{align*}
The optimal risk is defined as
\begin{equation*}
    W^*(\mu) = \inf_{\mathbf{I}} \inf_{\mathbf{c}}  W(\mathbf{I}, \mathbf{c}, \mu).
\end{equation*}

For a fixed feature subset $I_j$, it is easy to verify that 
\begin{equation*}
    \argmin_{c_j} \int ||x(I_j) - c_j||_{dn}^2 d\mu_n^{(j)}(x) = \int x(I_j) d\mu_n^{(j)}(x),
\end{equation*}
and 
\begin{equation*}
    \argmin_{c_j} \int ||x(I_j) - c_j||_{dn}^2 d\mu^{(j)}(x) = \int x(I_j) d\mu^{(j)}(x).
\end{equation*}

Let $\delta_n \ge 0$. A feature subsets $\mathbf{I}_n$ and centroids $\mathbf{c}_n$ as a whole is a $\delta_n$-minimizer of the empirical risk if 
\begin{equation*}
    W(\mathbf{I}_n, \mathbf{c}_n, \mu_n) \le W^*(\mu_n) + \delta_n,
\end{equation*}
where $W^*(\mu_n) = \inf_\mathbf{I} \inf_\mathbf{c} W(\mathbf{I}, \mathbf{c}, \mu_n)$. When $\delta_n = 0$, $\mathbf{I}_n$ and $\mathbf{c}_n$ as a whole is called an empirical risk minimizer. Since $\mu_n$ is supported on at most $n$ points, the existence of an empirical risk minimizer is guaranteed.

The first theoretical result of this paper is the following consistency theorem, which states that the risk of a $\delta_n$-minimizer of the empirical risk converges to the optimal risk as long as $\lim_{n \to \infty} \delta_n = 0$.

\begin{theorem}\label{thm:consistency}
Assume that all $\mu^{(j)}$ have finite second moments that are bounded by a constant $h$:
\begin{equation*}
    \max_{1 \le j \le k} \int ||x||_2^2 d\mu^{(j)}(x) \le h.
\end{equation*}
Let $\mathbf{I}_n$ and $\mathbf{c}_n$ be a $\delta_n$-minimizer of the empirical risk. If $\lim_{n \to \infty} \delta_n = 0$, then 
\begin{equation*}
    \lim_{n \to \infty} W(\mathbf{I}_n, \mathbf{c}_n, \mu) = W^*(\mu)\ a.s.
\end{equation*}
\end{theorem}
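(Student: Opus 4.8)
The plan is to follow the classical argument for consistency of empirical risk minimizers in $k$-means-type problems, adapted to the two-part parameter here (a feature partition $\mathbf{I}$ together with centroids $\mathbf{c}$). Since $W(\mathbf{I},\mathbf{c},\mu)\ge W^*(\mu)$ for every feasible pair, it suffices to show $\limsup_{n}W(\mathbf{I}_n,\mathbf{c}_n,\mu)\le W^*(\mu)$ almost surely. Two structural observations drive everything. First, because $p$ and $k$ are fixed, $\mathbf{I}$ ranges over the \emph{finite} set of ordered partitions of $\{1,\dots,p\}$ into $k$ nonempty blocks, and $l_j=|I_j|\in\{1,\dots,p\}$, so $1/l_j\in[1/p,1]$. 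Second, $\|z\|_{dn}^2=\frac1{l_j}\|z\|_2^2$, so for fixed $\mathbf{I}$ the minimization over $c_j$ is a rescaled mean-estimation problem with minimizer the conditional mean, as already noted in the excerpt. I may also assume $p^{(j)}>0$ for all $j$ (a class with $p^{(j)}=0$ contributes zero to both risks, and otherwise $S_j$ could be empty).

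First I would confine the near-optimal centroids to a fixed compact set. Plugging the all-zero centroids into $W(\cdot,\mu_n)$ and using $1/l_j\le1$ and $\|x(I_j)\|_2\le\|x\|_2$ gives $W^*(\mu_n)\le\int\|x\|_2^2\,d\mu_n(x)$, which tends a.s.\ to $\int\|x\|_2^2\,d\mu(x)\le h$ by the strong law; hence $W(\mathbf{I}_n,\mathbf{c}_n,\mu_n)\le W^*(\mu_n)+\delta_n\le 2h$ eventually a.s. For each $j$, Jensen's inequality then yields
\begin{equation*}
\Bigl\|c_{j,n}-\int x(I_{j,n})\,d\mu_n^{(j)}(x)\Bigr\|_2^2\le\int\bigl\|x(I_{j,n})-c_{j,n}\bigr\|_2^2\,d\mu_n^{(j)}(x)\le\frac{l_{j,n}}{p_n^{(j)}}\,W(\mathbf{I}_n,\mathbf{c}_n,\mu_n)\le\frac{2hp}{p_n^{(j)}}.
\end{equation*}
Since the empirical conditional means converge a.s.\ to the true conditional means and $p_n^{(j)}\to p^{(j)}>0$ a.s., all $c_{j,n}$ eventually lie in a fixed closed ball $K=\{c:\|c\|_2\le R\}$; I would take $R$ large enough that $K$ also contains $\int x(I_j)\,d\mu^{(j)}(x)$ for every partition (these have norm at most $\sqrt h$ by Jensen), which will be needed below.

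Next I would establish a uniform strong law over $K$ and the finite partition family. For fixed $\mathbf{I}$ and $j$, the class $\{x\mapsto\frac1{l_j}\|x(I_j)-c\|_2^2:c\in K\}$ has $\mu^{(j)}$-integrable envelope $(\|x\|_2+R)^2$ and Lipschitz modulus in $c$ bounded by $2(\|x\|_2+R)\|c-c'\|_2$; a finite $\varepsilon$-net over $K$, the ordinary strong law at the net points and for $\int(\|x\|_2+R)\,d\mu_n^{(j)}(x)$, and $\varepsilon\to0$ give $\sup_{c\in K}\bigl|\int\frac1{l_j}\|x(I_j)-c\|_2^2\,d\mu_n^{(j)}-\int\frac1{l_j}\|x(I_j)-c\|_2^2\,d\mu^{(j)}\bigr|\to0$ a.s. Combined with $p_n^{(j)}\to p^{(j)}$ and a finite union over partitions and over $j$, this gives $\sup_{\mathbf{I}}\sup_{\mathbf{c}\in K^k}|W(\mathbf{I},\mathbf{c},\mu_n)-W(\mathbf{I},\mathbf{c},\mu)|\to0$ a.s. To finish, I would pick an exact minimizer $(\mathbf{I}^*,\mathbf{c}^*)$ of $W(\cdot,\mu)$ (it exists: optimal centroids are conditional means, the partition set is finite), note $\mathbf{c}^*\in K^k$, and use the strong law to get $W^*(\mu_n)\le W(\mathbf{I}^*,\mathbf{c}^*,\mu_n)\to W^*(\mu)$ a.s. Then, eventually a.s.,
\begin{equation*}
W(\mathbf{I}_n,\mathbf{c}_n,\mu)\le W(\mathbf{I}_n,\mathbf{c}_n,\mu_n)+\sup_{\mathbf{I}}\sup_{\mathbf{c}\in K^k}\bigl|W(\mathbf{I},\mathbf{c},\mu_n)-W(\mathbf{I},\mathbf{c},\mu)\bigr|\le W^*(\mu_n)+\delta_n+o(1)\le W^*(\mu)+o(1),
\end{equation*}
using $\mathbf{c}_n\in K^k$ eventually, the uniform convergence, and $\delta_n\to0$; together with $W(\mathbf{I}_n,\mathbf{c}_n,\mu)\ge W^*(\mu)$ this gives the claim.

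The hard part will be the compactification step: showing that a $\delta_n$-near-minimizing centroid cannot escape to infinity. Once the parameter space is reduced to a fixed compact set, the uniform strong law and the transfer between $\mu_n$ and $\mu$ are routine, precisely because the partition set is finite and $\|\cdot\|_{dn}^2$ is only a bounded rescaling of $\|\cdot\|_2^2$; the one thing to watch is keeping all envelopes within reach of the finite-second-moment hypothesis.
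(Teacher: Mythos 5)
Your proposal is correct, but it takes a genuinely different route from the paper. The paper works through the $L_2$ Wasserstein distance: its Lemmas~\ref{lem:riskbound1}--\ref{lem:riskbound3} show that $(\mathbf{I},\mathbf{c})\mapsto W(\mathbf{I},\mathbf{c},\nu)$ changes by at most $O\bigl(\sqrt{h}\,\max_j\gamma(\mu^{(j)},\mu_n^{(j)})+h\,\max_j|p^{(j)}-p_n^{(j)}|\bigr)$ when $\nu$ is swapped between $\mu$ and $\mu_n$, uniformly over all pairs satisfying a second-moment bound (the key being that $x\mapsto\|x(I_j)-c_j\|_{dn}$ is a $1$-Lipschitz function of $x$ regardless of $I_j$), and its Lemma~\ref{lem:riskbound4} proves $\gamma(\mu^{(j)},\mu_n^{(j)})\to0$ a.s.\ via Skorokhod representation and Fatou; the finiteness of the partition family and compactness of the centroid set are never invoked. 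You instead run the classical Pollard-style argument: finitely many partitions, a compactification step showing near-minimizing centroids cannot escape a fixed ball, and an $\varepsilon$-net uniform strong law over that ball. Both arguments are sound. Yours is more elementary (no optimal-transport machinery) at the price of the compactification step, which you correctly identify as the delicate point and handle via the Jensen/variance decomposition; notably, your treatment covers an arbitrary $\delta_n$-minimizer, whereas the paper's verification of the boundedness hypothesis quietly assumes the $\delta_n$-minimizer's centroids are exact empirical conditional means. The paper's Wasserstein route, in turn, yields an explicit quantitative error bound that feeds directly into its remark on convergence rates, which your uniform-LLN argument would not give without additional work.
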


A detailed proof of Theorem \ref{thm:consistency} is given in Section \ref{sec:proof}, which also includes a remark at the end that provides some analysis on the rate of convergence.

We can characterize the feature subsets $\mathbf{I}^* = \{I_1^*, \ldots, I_k^*\}$ and the centroids $\mathbf{c}^* = \{c_1^*, \ldots, c_k^*\}$ that achieve the optimal risk $W^*(\mu)$ if we make some additional assumptions.
\begin{theorem}\label{thm:minimizer}
Assume that the feature subsets $\mathbf{I}^* = \{I_1^*, \ldots, I_k^*\}$ is a partition of the $p$ features $I = \{1, \ldots, p\}$, and for $1 \le j \le k$ we have
\begin{equation*}
    \min_{I_j \subset I} \int || x(I_j) - c_j||_{dn}^2 d\mu^{(j)}(x) = \int || x(I_j^*) - c_j^*||_{dn}^2 d\mu^{(j)}(x),
\end{equation*}
where $c_j = \int x(I_j) d\mu^{(j)}(x)$ and $c_j^* = \int x(I_j^*) d\mu^{(j)}(x)$. Then
\begin{equation*}
    W(\mathbf{I}^*, \mathbf{c}^*, \mu) = W^*(\mu).
\end{equation*}
\end{theorem}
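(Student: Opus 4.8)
The plan is to sandwich $W(\mathbf{I}^*, \mathbf{c}^*, \mu)$ between $W^*(\mu)$ from above, using that $\mathbf{I}^*$ is itself an admissible partition, and from below, using a term-by-term minimization that exploits the additive per-class structure of the risk. Recall from the discussion preceding the theorem that for any fixed feature subset $I_j$ the map $c_j \mapsto \int ||x(I_j) - c_j||_{dn}^2\, d\mu^{(j)}(x)$ is minimized at the conditional mean $\bar c_j := \int x(I_j)\, d\mu^{(j)}(x)$. Since in $W(\mathbf{I}, \mathbf{c}, \mu) = \sum_{j=1}^{k} p^{(j)} \int ||x(I_j) - c_j||_{dn}^2\, d\mu^{(j)}(x)$ the variable $c_j$ appears only in the $j$-th summand, the infimum over $\mathbf{c}$ decouples, giving, for every admissible $\mathbf{I} = \{I_1,\dots,I_k\}$,
\[
\inf_{\mathbf{c}} W(\mathbf{I}, \mathbf{c}, \mu) = \sum_{j=1}^{k} p^{(j)} \int ||x(I_j) - \bar c_j||_{dn}^2\, d\mu^{(j)}(x).
\]

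Next I would lower-bound each summand separately. For each $j$, replacing $I_j$ by an arbitrary subset of $I$ can only decrease the $j$-th term, so
\[
\int ||x(I_j) - \bar c_j||_{dn}^2\, d\mu^{(j)}(x) \;\ge\; \min_{I_j' \subset I} \int ||x(I_j') - c_j'||_{dn}^2\, d\mu^{(j)}(x),
\]
where $c_j' = \int x(I_j')\, d\mu^{(j)}(x)$ and the minimum ranges over \emph{all} nonempty subsets of $I$, crucially \emph{without} any disjointness constraint; it is attained since $I$ is finite. By the hypothesis of the theorem this minimum equals $\int ||x(I_j^*) - c_j^*||_{dn}^2\, d\mu^{(j)}(x)$. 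Summing over $j$ with the weights $p^{(j)}$ then gives $\inf_{\mathbf{c}} W(\mathbf{I}, \mathbf{c}, \mu) \ge W(\mathbf{I}^*, \mathbf{c}^*, \mu)$ for every admissible $\mathbf{I}$, and taking the infimum over $\mathbf{I}$ yields $W^*(\mu) \ge W(\mathbf{I}^*, \mathbf{c}^*, \mu)$.

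For the reverse inequality I would simply note that, by assumption, $\mathbf{I}^*$ is a partition of $I$ into nonempty sets, hence admissible in the definition of $W^*(\mu)$, and that $\mathbf{c}^* = \{c_1^*,\dots,c_k^*\}$ with $c_j^* = \int x(I_j^*)\, d\mu^{(j)}(x)$ is a legitimate choice of centroids; therefore $W^*(\mu) \le W(\mathbf{I}^*, \mathbf{c}^*, \mu)$. Combining the two inequalities gives $W(\mathbf{I}^*, \mathbf{c}^*, \mu) = W^*(\mu)$. (To keep all integrals finite one may impose the finite–second–moment condition of Theorem \ref{thm:consistency}; otherwise both sides are $+\infty$ and the identity is trivial.)

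There is no serious technical obstacle here: the argument is a decoupling of the risk across classes. The only point that genuinely needs care — and the reason the hypothesis is phrased with an unconstrained $\min_{I_j \subset I}$ rather than a minimum over subsets disjoint from the others — is that the term-by-term lower bound must ignore disjointness in order to be valid for \emph{every} admissible partition $\mathbf{I}$; the partition hypothesis on $\mathbf{I}^*$ is then precisely what guarantees that this a priori loose lower bound is achieved by an admissible configuration, closing the gap between the two bounds.
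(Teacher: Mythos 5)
Your proof is correct and is essentially the paper's argument in direct form: both rest on the per-class decoupling of the risk, the optimality of the conditional mean as the centroid for a fixed feature subset, and the hypothesis that $I_j^*$ minimizes the per-class term over \emph{all} subsets of $I$ (ignoring disjointness), which is exactly the point you flag at the end. The only difference is presentational --- the paper phrases the lower bound as a proof by contradiction (if some configuration beat $(\mathbf{I}^*,\mathbf{c}^*)$, some class-$t$ term would have to beat the unconstrained minimum), while you give the same inequality as a direct sandwich.
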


\begin{proof}
We prove by contradiction. Assume that there exist feature subsets $\mathbf{I}^\dagger = \{I_1^\dagger, \ldots, I_k^\dagger\}$ and centroids $\mathbf{c}^\dagger = \{c_1^\dagger, \ldots, c_k^\dagger\}$ such that 
\begin{equation*}
    W(\mathbf{I}^\dagger, \mathbf{c}^\dagger, \mu) < W(\mathbf{I}^*, \mathbf{c}^*, \mu).
\end{equation*}
Since by definition 
\begin{equation*}
     W(\mathbf{I}, \mathbf{c}, \mu) = \sum_{j=1}^{k} p^{(j)} \int ||x(I_j) - c_j||_{dn}^2 d\mu^{(j)}(x),
\end{equation*}
there must exist at least one $t \in \{1, \ldots, k\}$ such that
\begin{equation*}
    \int ||x(I_t^\dagger) - c_t^\dagger||_{dn}^2 d\mu^{(t)}(x) < \int ||x(I_t^*) - c_t^*||_{dn}^2 d\mu^{(t)}(x).
\end{equation*}
However, the property of $\mathbf{I}^*$ guarantees that
\begin{align*}
    \int || x(I_t^*) - c_t^*||_{dn}^2 d\mu^{(t)}(x) &= \min_{I_t \subset I} \int || x(I_t) - c_t||_{dn}^2 d\mu^{(t)}(x) \\
    &\le \int ||x(I_t^\dagger) - \int x(I_t^\dagger) d\mu^{(t)}(x)||_{dn}^2 d\mu^{(t)}(x) \\
    &\le \int ||x(I_t^\dagger) - c_t^\dagger||_{dn}^2 d\mu^{(t)}(x).
\end{align*}
where $c_t^* = \int x(I_t^*) d\mu^{(t)}(x)$ and $c_t = \int x(I_t) d\mu^{(t)}(x)$. Now we have a contradiction, and therefore such $\mathbf{I}^\dagger$ and $\mathbf{c}^\dagger$ could not exist.
\end{proof}

\begin{corollary}\label{cor:minimizer}
Assume that the $p$ features are all independent and consist of $k$ successive blocks, each of size $d$. In addition, assume that for $1\le j \le k$ and $X = (x_1, \ldots, x_p) \sim \mu^{(j)}$, the $d$ entries in the $j$-th block $x_{(j-1)d+1}, \ldots, x_{jd}$ are i.i.d.\ with variance $\sigma_1^2$, and the rest of the $p-d$ entries are i.i.d.\ with variance $\sigma_2^2$, with $\sigma_1^2 < \sigma_2^2$. For $1\le j \le k$, if we let 
\begin{equation*}
    I_j^* = \{x_{(j-1)d+1}, \ldots, x_{jd}\}, 
\end{equation*}
and $c_j^* = \int x(I_j^*) d\mu^{(j)}(x)$, then $W(\mathbf{I}^*, \mathbf{c}^*, \mu) = W^*(\mu)$. 
\end{corollary}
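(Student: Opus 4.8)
The plan is to derive Corollary~\ref{cor:minimizer} as a direct application of Theorem~\ref{thm:minimizer}, so the whole task reduces to checking that the blockwise assignment $\mathbf{I}^*$ satisfies the two hypotheses of that theorem. First I would note that since $p = kd$ and the $k$ blocks are successive of common size $d$, the sets $I_1^*, \ldots, I_k^*$ are nonempty, pairwise disjoint, and their union is $I = \{1,\ldots,p\}$; hence $\mathbf{I}^*$ is a partition of the features, which is the first hypothesis.

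For the second hypothesis I would compute the inner objective for an arbitrary nonempty $A \subseteq I$ under $\mu^{(j)}$. Writing $c_A = \int x(A)\,d\mu^{(j)}(x)$, the $i$-th coordinate of $c_A$ is $\mathbb{E}_{\mu^{(j)}}[x_i]$, and the definition of the dimensionality-normalized norm gives
\[
  \int \|x(A) - c_A\|_{dn}^2 \, d\mu^{(j)}(x) \;=\; \frac{1}{|A|}\sum_{i\in A}\mathrm{Var}_{\mu^{(j)}}(x_i);
\]
note that only the marginal variances enter, so the full independence assumption is not needed here beyond fixing these marginals. Under $\mu^{(j)}$ the coordinates inside block $j$ have variance $\sigma_1^2$ and those outside have variance $\sigma_2^2$, so if $A$ meets block $j$ in $a$ coordinates and the complement in $b$ coordinates (with $a+b=|A|\ge 1$), the objective equals $(a\sigma_1^2 + b\sigma_2^2)/(a+b)$. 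Since $\sigma_1^2 < \sigma_2^2$, this is a convex combination that is $\ge \sigma_1^2$, with equality exactly when $b=0$, i.e.\ when $A$ is a nonempty subset of block $j$. In particular $A = I_j^*$ attains the minimum, so
\[
  \min_{I_j \subseteq I} \int \|x(I_j) - c_j\|_{dn}^2 \, d\mu^{(j)}(x) \;=\; \int \|x(I_j^*) - c_j^*\|_{dn}^2 \, d\mu^{(j)}(x) \;=\; \sigma_1^2
\]
for every $j$, which is exactly the second hypothesis of Theorem~\ref{thm:minimizer}. Invoking that theorem then yields $W(\mathbf{I}^*, \mathbf{c}^*, \mu) = W^*(\mu)$.

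There is no substantive obstacle in this argument — it is essentially bookkeeping on top of Theorem~\ref{thm:minimizer}. The only points worth stating with care are that the minimization over feature subsets is restricted to nonempty subsets (so that $\|\cdot\|_{dn}$ is well defined), and that the inner minimizer is not unique — any nonempty subset of block $j$, not just the full block, achieves value $\sigma_1^2$ — but Theorem~\ref{thm:minimizer} only requires that the value at $I_j^*$ equal the minimum, which it does.
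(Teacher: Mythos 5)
Your proof is correct and follows essentially the same route as the paper: verify the hypotheses of Theorem \ref{thm:minimizer} by computing the per-coordinate variances and observing that the dimensionality-normalized objective for a subset $A$ is the convex combination $(a\sigma_1^2+b\sigma_2^2)/(a+b)\ge\sigma_1^2$, with equality exactly when $A$ lies inside block $j$. Your explicit general formula and the remarks on non-uniqueness of the minimizer and on only needing the marginal variances are slightly more careful than the paper's version, but the argument is the same.
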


\begin{proof}
This is a direct corollary of Theorem \ref{thm:minimizer}. We only need to verify that for $1 \le j \le k$ we have
\begin{equation*}
    \min_{I_j \subset I} \int || x(I_j) - c_j||_{dn}^2 d\mu^{(j)}(x) = \int || x(I_j^*) - c_j^*||_{dn}^2 d\mu^{(j)}(x),
\end{equation*}
where $c_j = \int x(I_j) d\mu^{(j)}(x)$ and $c_j^* = \int x(I_j^*) d\mu^{(j)}(x)$. By assumption, for any specific feature $I_j = \{i\}$, 
\begin{equation*}
    \int || x(I_j) - c_j||_2^2 d\mu^{(j)}(x) = \sigma_1^2
\end{equation*}
if $i \in I_j^*$, and 
\begin{equation*}
    \int || x(I_j) - c_j||_2^2 d\mu^{(j)}(x) = \sigma_2^2 > \sigma_1^2
\end{equation*}
if $i \notin I_j^*$. This means that for any feature subset $I_j \subset I_j^*$, we have
\begin{equation*}
    \int || x(I_j) - c_j||_{dn}^2 d\mu^{(j)}(x) = \sigma_1^2.
\end{equation*}
In addition, for any other feature subset $I_j$ that includes at least one feature $i \notin I_j^*$, we have
\begin{equation*}
    \int || x(I_j) - c_j||_{dn}^2 d\mu^{(j)}(x) > \sigma_1^2.
\end{equation*}
Hence for $1 \le j \le k$ we have
\begin{equation*}
    \min_{I_j \subset I} \int || x(I_j) - c_j||_{dn}^2 d\mu^{(j)}(x) = \sigma_1^2 = \int || x(I_j^*) - c_j^*||_{dn}^2 d\mu^{(j)}(x),
\end{equation*}
and the proof is completed.
\end{proof}

\section{Proof of the Main Theoretical Result}\label{sec:proof}
In this section, we give a detailed proof of Theorem \ref{thm:consistency}, which is our main theoretical result. Recall that the $L_2$ Wasserstein distance between two probability measures $\mu_1$ and $\mu_2$ on $\mathbb{R}^p$, with finite second moment, is defined as
\begin{equation*}
    \gamma(\mu_1, \mu_2) = \inf_{X_1 \sim \mu_1, X_2 \sim \mu_2} (\mathbb{E}||X_1 - X_2||_2^2)^{1/2},
\end{equation*}
where the infimum is taken over all joint distributions of two random variables $X_1$ and $X_2$ such that $X_1$ has distribution $\mu_1$ and $X_2$ has distribution $\mu_2$. It has been shown in \citet{rachev1998mass} that $\gamma$ is a metric on the space of probability distributions on $\mathbb{R}^p$ with finite second moment, and that the infimum is a minimum and can be achieved.

We first prove the following four lemmas.

\begin{lemma}\label{lem:riskbound1}
For any feature subset $I_j$ and centroid $c_j$, we have
\begin{equation*}
    \left| \left[ \int ||x(I_j) - c_j||_{dn}^2 d\mu_1(x) \right]^{1/2} - \left[ \int ||x(I_j) - c_j||_{dn}^2 d\mu_2(x) \right]^{1/2} \right| \le \gamma(\mu_1, \mu_2).
\end{equation*}
\end{lemma}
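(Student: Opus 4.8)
The plan is to realize both integrals as squared $L^2$-norms of a single function evaluated along an optimal coupling of $\mu_1$ and $\mu_2$, and then combine Minkowski's inequality with the fact that $x \mapsto \|x(I_j) - c_j\|_{dn}$ is $1$-Lipschitz with respect to the Euclidean norm on $\mathbb{R}^p$. Concretely, I would fix the feature subset $I_j$ and the centroid $c_j$, and set $f(x) = \|x(I_j) - c_j\|_{dn}$ for $x \in \mathbb{R}^p$. For any probability measure $\nu$ with finite second moment and any $X \sim \nu$ one has $\int \|x(I_j) - c_j\|_{dn}^2\, d\nu(x) = \mathbb{E}[f(X)^2] < \infty$ (finiteness uses that $c_j$ is a fixed vector and $\nu$ has a finite second moment), so the quantity on the left-hand side of the lemma equals $\bigl|\, \|f(X_1)\|_{L^2} - \|f(X_2)\|_{L^2} \,\bigr|$ for \emph{any} coupling $(X_1, X_2)$ with $X_1 \sim \mu_1$ and $X_2 \sim \mu_2$.

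I would then invoke the fact, cited from \citet{rachev1998mass}, that the infimum defining $\gamma(\mu_1, \mu_2)$ is attained, and fix such an optimal coupling $(X_1, X_2)$, so that $\mathbb{E}\|X_1 - X_2\|_2^2 = \gamma(\mu_1, \mu_2)^2$. By Minkowski's inequality in $L^2$, $\|f(X_1)\|_{L^2} \le \|f(X_2)\|_{L^2} + \|f(X_1) - f(X_2)\|_{L^2}$, and symmetrically with the roles of $X_1$ and $X_2$ exchanged; hence it suffices to bound $\|f(X_1) - f(X_2)\|_{L^2}$ by $\gamma(\mu_1, \mu_2)$. Pointwise, the reverse triangle inequality for the norm $\|\cdot\|_{dn}$ on $\mathbb{R}^{I_j}$ gives $|f(X_1) - f(X_2)| \le \|X_1(I_j) - X_2(I_j)\|_{dn}$, and since $l_j \ge 1$ we get $\|X_1(I_j) - X_2(I_j)\|_{dn}^2 = \frac{1}{l_j}\sum_{i \in I_j}(X_{1,i} - X_{2,i})^2 \le \sum_{i \in I_j}(X_{1,i} - X_{2,i})^2 \le \|X_1 - X_2\|_2^2$. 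Taking expectations yields $\mathbb{E}|f(X_1) - f(X_2)|^2 \le \mathbb{E}\|X_1 - X_2\|_2^2 = \gamma(\mu_1, \mu_2)^2$, and combining the steps gives $\bigl|\, \|f(X_1)\|_{L^2} - \|f(X_2)\|_{L^2}\,\bigr| \le \|f(X_1) - f(X_2)\|_{L^2} \le \gamma(\mu_1, \mu_2)$, which is exactly the asserted inequality.

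There is no genuine obstacle here: the proof is an assembly of standard facts, and I expect the same optimal-coupling-plus-Minkowski template to recur in the remaining lemmas. The only points that merit a line of care are (i) checking that the two integrals are finite, which is immediate from the finite-second-moment assumption and the fact that $c_j$ is fixed; (ii) noticing that it is enough to bound one of the two differences and then symmetrize; and (iii) observing that the dimensionality normalization is harmless — indeed slightly helpful — because $1/l_j \le 1$ makes $\|\cdot\|_{dn}$ dominated by the Euclidean norm on the $I_j$-coordinates, so $f$ is $1$-Lipschitz with respect to $\|\cdot\|_2$.
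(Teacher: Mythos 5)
Your proof is correct and takes essentially the same route as the paper's: both fix an optimal coupling attaining $\gamma(\mu_1,\mu_2)$, use the (reverse) triangle inequality together with the domination $\|X_1(I_j)-X_2(I_j)\|_{dn}^2 \le \|X_1-X_2\|_2^2$ coming from $l_j \ge 1$, and conclude by Minkowski's inequality in $L^2$ --- which the paper simply proves by hand by expanding the square and applying Cauchy--Schwarz to the cross term. There are no gaps.
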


\begin{proof}
Let $X_1 \sim \mu_1$ and $X_2 \sim \mu_2$ achieve the infimum defining $\gamma(\mu_1, \mu_2)$. Then
\begin{equation*}
    \left[ \int ||x(I_j) - c_j||_{dn}^2 d\mu_1(x) \right]^{1/2} 
    = \left[ \mathbb{E} \frac{||X_1(I_j) - c_j||_2^2}{l_j} \right]^{1/2} 
    \le \left[ \mathbb{E} \frac{(||X_1(I_j) - X_2(I_j)||_2 + ||X_2(I_j) - c_j||_2)^2}{l_j} \right]^{1/2}.
\end{equation*}
Using Cauchy–Schwarz inequality, we have
\begin{align*}
    &\mathbb{E} \left[ \frac{(||X_1(I_j) - X_2(I_j)||_2 + ||X_2(I_j) - c_j||_2)^2}{l_j} \right] \\
    &\le \mathbb{E} ||X_1-X_2||_2^2 + \mathbb{E} \left[ \frac{||X_2(I_j) - c_j||_2^2}{l_j} \right] + 2 \mathbb{E} \left[ ||X_1 - X_2||_2 \cdot \frac{||X_2(I_j) - c_j||_2}{\sqrt{l_j}} \right] \\
    &\le \mathbb{E} ||X_1-X_2||_2^2 + \mathbb{E} \left[ \frac{||X_2(I_j) - c_j||_2^2}{l_j} \right] + 2 \left[ \mathbb{E} ||X_1 - X_2||_2^2 \right]^{1/2} \left[ \mathbb{E} \frac{||X_2(I_j) - c_j||_2^2}{l_j} \right]^{1/2} \\
    &= \left( \left[ \mathbb{E} ||X_1 - X_2||_2^2 \right]^{1/2} + \left[ \mathbb{E} \frac{||X_2(I_j) - c_j||_2^2}{l_j} \right]^{1/2} \right)^2.
\end{align*}
Consequently
\begin{align*}
    \left[ \int ||x(I_j) - c_j||_{dn}^2 d\mu_1(x) \right]^{1/2} 
    &\le \left[ \mathbb{E} ||X_1 - X_2||_2^2 \right]^{1/2} + \left[ \mathbb{E} \frac{||X_2(I_j) - c_j||_2^2}{l_j} \right]^{1/2} \\
    &= \gamma(\mu_1, \mu_2) + \left[ \int ||x(I_j) - c_j||_{dn}^2 d\mu_2(x) \right]^{1/2},
\end{align*}
which implies that 
\begin{equation*}
    \left[ \int ||x(I_j) - c_j||_{dn}^2 d\mu_1(x) \right]^{1/2} - \left[ \int ||x(I_j) - c_j||_{dn}^2 d\mu_2(x) \right]^{1/2} \le \gamma(\mu_1, \mu_2).
\end{equation*}
The other direction can be proved similarly.
\end{proof}

\begin{lemma}\label{lem:riskbound2}
For any feature subset $I_j$ and centroid $c_j$, if
\begin{equation*}
    \max \left( \int ||x(I_j) - c_j||_{dn}^2 d\mu_1(x), \int ||x(I_j) - c_j||_{dn}^2 d\mu_2(x) \right) \le h,
\end{equation*}
then
\begin{equation*}
    \left| \int ||x(I_j) - c_j||_{dn}^2 d\mu_1(x) - \int ||x(I_j) - c_j||_{dn}^2 d\mu_2(x) \right| \le 2\sqrt{h} \gamma(\mu_1, \mu_2).
\end{equation*}
\end{lemma}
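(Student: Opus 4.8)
The plan is to reduce Lemma \ref{lem:riskbound2} to Lemma \ref{lem:riskbound1} using the elementary factorization $a^2 - b^2 = (a-b)(a+b)$. First I would introduce the shorthand
\[
a = \left[ \int ||x(I_j) - c_j||_{dn}^2 d\mu_1(x) \right]^{1/2}, \qquad b = \left[ \int ||x(I_j) - c_j||_{dn}^2 d\mu_2(x) \right]^{1/2},
\]
both of which are finite nonnegative reals by hypothesis, and the assumption $\max(a^2, b^2) \le h$ rephrases as $a \le \sqrt{h}$ and $b \le \sqrt{h}$.

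Next I would apply Lemma \ref{lem:riskbound1} directly to get $|a - b| \le \gamma(\mu_1, \mu_2)$. The quantity we must bound is $|a^2 - b^2| = |a-b| \cdot (a+b)$; combining the bound on $|a-b|$ with $a + b \le 2\sqrt{h}$ yields $|a^2 - b^2| \le 2\sqrt{h}\,\gamma(\mu_1, \mu_2)$, which is precisely the claim.

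The only point requiring any care is a bookkeeping one rather than a mathematical obstacle: Lemma \ref{lem:riskbound1} (and the very definition of $\gamma$) presupposes that $\mu_1$ and $\mu_2$ have finite second moment, so that $\gamma(\mu_1,\mu_2)$ is finite and the optimal coupling exists; in the intended use within the proof of Theorem \ref{thm:consistency} this holds because the measures in play are the true and empirical conditional distributions $\mu^{(j)}$ and $\mu_n^{(j)}$, whose second moments are controlled by the constant $h$ in the theorem's hypothesis. With that understood, the argument is essentially a one-line consequence of Lemma \ref{lem:riskbound1} and the difference-of-squares identity, so there is no substantive difficulty to overcome.
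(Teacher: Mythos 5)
Your proposal is correct and matches the paper's own proof essentially verbatim: both define $a$ and $b$ as the square roots of the two integrals, factor $|a^2-b^2|=|a+b|\,|a-b|$, bound $|a-b|$ by $\gamma(\mu_1,\mu_2)$ via Lemma \ref{lem:riskbound1}, and bound $a+b$ by $2\sqrt{h}$ using the hypothesis. The additional remark about finiteness of second moments is a reasonable bookkeeping observation but does not change the argument.
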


\begin{proof}
Let $a = \left[ \int ||x(I_j) - c_j||_{dn}^2 d\mu_1(x) \right]^{1/2}$ and $b = \left[ \int ||x(I_j) - c_j||_{dn}^2 d\mu_2(x) \right]^{1/2}$. Then
\begin{equation*}
    \left| \int ||x(I_j) - c_j||_{dn}^2 d\mu_1(x) - \int ||x(I_j) - c_j||_{dn}^2 d\mu_2(x) \right| = |a^2 - b^2| = |a+b||a-b| \le 2\sqrt{h}\gamma(\mu_1, \mu_2),
\end{equation*}
where the last inequality follows from Lemma \ref{lem:riskbound1}.
\end{proof}

\begin{lemma}\label{lem:riskbound3}
Recall that $\mu$ denote the distribution of $(X, Y)$, and is associated with $p^{(j)}$ and $\mu^{(j)}$ for $1 \le j \le k$. In addition, $\mu_n$ denote the empirical distribution of $(X_1, Y_1), \ldots, (X_n, Y_n)$, and is associated with $p_n^{(j)}$ and $\mu_n^{(j)}$ for $1 \le j \le k$. For any feature subsets $\mathbf{I}$ and centroids $\mathbf{c}$, if
\begin{equation*}
    \max_{I_j \in \mathbf{I}, c_j \in \mathbf{c}} \left( \int ||x(I_j) - c_j||_{dn}^2 d\mu^{(j)}(x), \int ||x(I_j) - c_j||_{dn}^2 d\mu_n^{(j)}(x) \right) \le h,
\end{equation*}
then
\begin{equation*}
    \left| W(\mathbf{I}, \mathbf{c}, \mu) - W(\mathbf{I}, \mathbf{c}, \mu_n) \right| \le 2k\sqrt{h} \max_{1 \le j \le k} \gamma(\mu^{(j)}, \mu_n^{(j)}) + kh \max_{1 \le j \le k} \left| p^{(j)} - p_n^{(j)} \right|.
\end{equation*}
\end{lemma}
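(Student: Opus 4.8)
The plan is to bound the difference $|W(\mathbf{I}, \mathbf{c}, \mu) - W(\mathbf{I}, \mathbf{c}, \mu_n)|$ by decomposing it term by term over the $k$ classes and, within each class, separating the discrepancy caused by the difference in class probabilities ($p^{(j)}$ versus $p_n^{(j)}$) from the discrepancy caused by the difference in conditional distributions ($\mu^{(j)}$ versus $\mu_n^{(j)}$). Writing $R^{(j)} = \int ||x(I_j) - c_j||_{dn}^2 \, d\mu^{(j)}(x)$ and $R_n^{(j)} = \int ||x(I_j) - c_j||_{dn}^2 \, d\mu_n^{(j)}(x)$, so that $W(\mathbf{I}, \mathbf{c}, \mu) = \sum_j p^{(j)} R^{(j)}$ and $W(\mathbf{I}, \mathbf{c}, \mu_n) = \sum_j p_n^{(j)} R_n^{(j)}$, I would use the algebraic identity
\begin{equation*}
    p^{(j)} R^{(j)} - p_n^{(j)} R_n^{(j)} = p^{(j)} \bigl(R^{(j)} - R_n^{(j)}\bigr) + \bigl(p^{(j)} - p_n^{(j)}\bigr) R_n^{(j)}.
\end{equation*}
This is the standard add-and-subtract trick, and it is the only slightly nonobvious step; everything else is bounding.

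Next I would bound each of the two pieces. For the first piece, the hypothesis guarantees $\max(R^{(j)}, R_n^{(j)}) \le h$ for every $j$, so Lemma \ref{lem:riskbound2} (applied with $\mu_1 = \mu^{(j)}$ and $\mu_2 = \mu_n^{(j)}$) gives $|R^{(j)} - R_n^{(j)}| \le 2\sqrt{h}\, \gamma(\mu^{(j)}, \mu_n^{(j)})$. Multiplying by $p^{(j)} \le 1$ and summing over $j$, and then replacing each $\gamma(\mu^{(j)}, \mu_n^{(j)})$ by the maximum over $j$, yields a bound of $2k\sqrt{h}\, \max_{1 \le j \le k} \gamma(\mu^{(j)}, \mu_n^{(j)})$ on the total contribution of the first pieces (using $\sum_j p^{(j)} = 1$, or more crudely that there are $k$ terms each at most $1$). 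For the second piece, we have $R_n^{(j)} \le h$ by hypothesis, so $|(p^{(j)} - p_n^{(j)}) R_n^{(j)}| \le h\, |p^{(j)} - p_n^{(j)}| \le h\, \max_{1 \le j \le k} |p^{(j)} - p_n^{(j)}|$, and summing the $k$ such terms gives $kh\, \max_{1 \le j \le k} |p^{(j)} - p_n^{(j)}|$. Combining the two bounds via the triangle inequality gives exactly the claimed inequality.

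There is no real obstacle here — the lemma is essentially bookkeeping built on top of Lemma \ref{lem:riskbound2}, whose own proof (via the Wasserstein coupling and Cauchy–Schwarz) did the analytic work. The only point requiring a moment's care is ensuring the hypothesis of Lemma \ref{lem:riskbound2} is met for each $j$, which follows immediately from the stated uniform bound by $h$, and keeping track of the constants so that the $k$ factors land in the right places. I would write the proof as a short displayed chain of inequalities following the decomposition above.
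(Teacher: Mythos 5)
Your proposal is correct and follows essentially the same argument as the paper: the same add-and-subtract decomposition into a $\sum_j p^{(j)}(R^{(j)} - R_n^{(j)})$ piece bounded via Lemma \ref{lem:riskbound2} and a $\sum_j (p^{(j)} - p_n^{(j)}) R_n^{(j)}$ piece bounded by $h$ times the probability discrepancy. The constants land exactly as in the paper's proof.
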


\begin{proof}
By triangle inequality, we have
\begin{align}
    \left| W(\mathbf{I}, \mathbf{c}, \mu) - W(\mathbf{I}, \mathbf{c}, \mu_n) \right| &= \left| \sum_{j=1}^{k} p^{(j)} \int ||x(I_j) - c_j||_{dn}^2 d\mu^{(j)}(x) - \sum_{j=1}^{k} p_n^{(j)} \int ||x(I_j) - c_j||_{dn}^2 d\mu_n^{(j)}(x) \right| \nonumber \\
    &\le \left| \sum_{j=1}^{k} p^{(j)} \int ||x(I_j) - c_j||_{dn}^2 d\mu^{(j)}(x) - \sum_{j=1}^{k} p^{(j)} \int ||x(I_j) - c_j||_{dn}^2 d\mu_n^{(j)}(x) \right| \label{lem_simplify:1} \\
    &+ \left| \sum_{j=1}^{k} p^{(j)} \int ||x(I_j) - c_j||_{dn}^2 d\mu_n^{(j)}(x) - \sum_{j=1}^{k} p_n^{(j)} \int ||x(I_j) - c_j||_{dn}^2 d\mu_n^{(j)}(x) \right|. \label{lem_simplify:2}
\end{align}

Using Lemma \ref{lem:riskbound2}, we have the following bound for the first term:
\begin{align*}
    (\ref{lem_simplify:1}) &= \sum_{j=1}^{k} p^{(j)} \left| \int ||x(I_j) - c_j||_{dn}^2 d\mu^{(j)}(x) - \int ||x(I_j) - c_j||_{dn}^2 d\mu_n^{(j)}(x) \right| \\
    &\le \sum_{j=1}^{k} p^{(j)} 2\sqrt{h} \gamma(\mu^{(j)}, \mu_n^{(j)}) \le \sum_{j=1}^{k} 2\sqrt{h} \gamma(\mu^{(j)}, \mu_n^{(j)}) \le 2k\sqrt{h} \max_{1 \le j \le k} \gamma(\mu^{(j)}, \mu_n^{(j)}).
\end{align*}

We have the following bound for the second term:
\begin{equation*}
    (\ref{lem_simplify:2}) = \sum_{j=1}^{k} \int ||x(I_j) - c_j||_{dn}^2 d\mu_n^{(j)}(x) \left| p^{(j)} - p_n^{(j)} \right| \le \sum_{j=1}^{k} h \left| p^{(j)} - p_n^{(j)} \right| \le kh \max_{1 \le j \le k} \left| p^{(j)} - p_n^{(j)} \right|.
\end{equation*}

Lemma \ref{lem:riskbound3} follows directly from the above three inequalities.
\end{proof}

\begin{lemma}\label{lem:riskbound4}
For $1 \le j \le k$, $\displaystyle \lim_{n \to \infty} |p^{(j)} - p_n^{(j)}| = 0$ a.s., and $\displaystyle \lim_{n \to \infty} \gamma(\mu^{(j)}, \mu_n^{(j)}) = 0$ a.s.
\end{lemma}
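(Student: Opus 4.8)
The plan is to handle the two assertions separately. The first, $\lim_{n\to\infty}|p^{(j)}-p_n^{(j)}|=0$ a.s., is immediate from the strong law of large numbers: since $Y_1,\dots,Y_n$ are i.i.d., the indicators $\mathbf{1}_{\{Y_i=j\}}$ are i.i.d.\ Bernoulli with mean $p^{(j)}$, and $p_n^{(j)}=\frac1n\sum_{i=1}^n\mathbf{1}_{\{Y_i=j\}}\to p^{(j)}$ a.s.

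For the second assertion I would reduce the convergence of the conditional empirical measures to the standard fact that the empirical measure of an i.i.d.\ sample converges to its common law in the $L_2$ Wasserstein metric. Concretely, list the class-$j$ training samples in increasing order of their index as $Z_1^{(j)},Z_2^{(j)},\dots$; conditionally on the labels these are i.i.d.\ with law $\mu^{(j)}$, and since $p^{(j)}>0$ (so that $\mu^{(j)}$ is well defined) the first assertion gives that the number $N_n^{(j)}=n\,p_n^{(j)}$ of class-$j$ samples tends to $\infty$ a.s. Writing $\nu_m^{(j)}=\frac1m\sum_{i=1}^m\delta_{Z_i^{(j)}}$ for the empirical measure of the first $m$ of them, we have $\mu_n^{(j)}=\nu_{N_n^{(j)}}^{(j)}$, so it suffices to show $\gamma(\mu^{(j)},\nu_m^{(j)})\to 0$ a.s.\ as $m\to\infty$ along deterministic $m$, and then substitute the a.s.-divergent random subsequence $m=N_n^{(j)}$.

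To show $\gamma(\mu^{(j)},\nu_m^{(j)})\to 0$ a.s., I would invoke the characterization of $L_2$ Wasserstein convergence in \citet{rachev1998mass}: for probability measures on $\mathbb{R}^p$ with finite second moment, $\gamma(\rho_m,\rho)\to 0$ is equivalent to $\rho_m\Rightarrow\rho$ weakly together with $\int\|x\|_2^2\,d\rho_m\to\int\|x\|_2^2\,d\rho$. The weak convergence $\nu_m^{(j)}\Rightarrow\mu^{(j)}$ a.s.\ is Varadarajan's theorem for i.i.d.\ samples on the separable space $\mathbb{R}^p$, and the convergence of second moments $\int\|x\|_2^2\,d\nu_m^{(j)}=\frac1m\sum_{i=1}^m\|Z_i^{(j)}\|_2^2\to\int\|x\|_2^2\,d\mu^{(j)}$ a.s.\ is again the strong law of large numbers, applicable because $\int\|x\|_2^2\,d\mu^{(j)}\le h<\infty$. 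Hence $\gamma(\mu^{(j)},\nu_m^{(j)})\to 0$ a.s., and combining this with $N_n^{(j)}\to\infty$ a.s.\ gives $\gamma(\mu^{(j)},\mu_n^{(j)})\to 0$ a.s.

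The main obstacle I anticipate is the bookkeeping around the random, data-dependent sample size $N_n^{(j)}$: one must argue that the class-$j$ samples, extracted in index order, genuinely form an i.i.d.\ sequence with law $\mu^{(j)}$ (so the i.i.d.\ results apply), and that composing the convergent sequence $\nu_m^{(j)}$ with the random index $N_n^{(j)}$ is legitimate — which holds because on the almost sure event where both $\gamma(\mu^{(j)},\nu_m^{(j)})\to 0$ over deterministic $m$ and $N_n^{(j)}\to\infty$ occur, the composition converges. The analytic ingredients — the Wasserstein/weak-convergence/second-moment equivalence, Varadarajan's theorem, and the two applications of the strong law — are routine.
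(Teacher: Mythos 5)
Your proposal is correct, and its skeleton matches the paper's: both arguments reduce the Wasserstein statement to (i) almost-sure weak convergence of the conditional empirical measures $\mu_n^{(j)}$ to $\mu^{(j)}$ and (ii) almost-sure convergence of the second moments, and the first assertion is the strong law in both cases. The difference is in how the last step is discharged and where the care is spent. The paper \emph{proves} the implication ``weak convergence plus second-moment convergence implies $\gamma\to 0$'' by hand, via Skorokhod's representation theorem followed by a Fatou argument showing $\mathbb{E}\|Z_n-Z\|_2^2\to 0$; you instead invoke this equivalence as a known characterization of $L_2$ Wasserstein convergence (it is indeed in \citet{rachev1998mass}), which is perfectly legitimate and shortens the argument. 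Conversely, you are more careful than the paper about a point it glosses over: the paper simply asserts that a.s.\ convergence of $\mu_n$ to $\mu$ ``implies'' a.s.\ convergence of $\mu_n^{(j)}$ to $\mu^{(j)}$, whereas you make this precise by extracting the class-$j$ subsample as an i.i.d.\ sequence with law $\mu^{(j)}$, applying Varadarajan's theorem and the strong law along deterministic sample sizes $m$, and then composing with the a.s.-divergent random count $N_n^{(j)}=np_n^{(j)}$. Your version is therefore slightly more rigorous on the conditioning/random-index bookkeeping, at the cost of outsourcing the analytic core that the paper chooses to prove explicitly; the only hypothesis you add, $p^{(j)}>0$, is implicitly needed by the paper as well for $\mu^{(j)}$ and $\mu_n^{(j)}$ to make sense.
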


\begin{proof}
It is well known that the empirical measure $\mu_n$ converges to $\mu$ almost surely. This implies that for $1 \le j \le k$, we have $p_n^{(j)}$ converges to $p^{(j)}$ almost surely, and $\mu_n^{(j)}$ converges to $\mu^{(j)}$ almost surely. For each $1 \le j \le k$, since $p_n^{(j)}$ converges to $p^{(j)}$ almost surely, we know that
\begin{equation*}
    \lim_{n \to \infty} |p^{(j)} - p_n^{(j)}| = 0\ a.s.
\end{equation*}
Since $\mu_n^{(j)}$ converges to $\mu^{(j)}$ almost surely, by Skorokhod's representation theorem, there exist $Z_n \sim \mu_n^{(j)}$ and $Z \sim \mu^{(j)}$ jointly distributed such that $Z_n \to Z$ almost surely. By the triangle inequality, we have 
\begin{equation*}
    2||Z_n||_2^2 + 2||Z||_2^2 - ||Z_n - Z||_2^2 \ge ||Z_n||_2^2 + ||Z||_2^2 - 2||Z_n||_2 ||Z||_2 \ge 0.
\end{equation*}
Hence Fatou's lemma implies
\begin{align*}
    &\liminf_{n \to \infty} \mathbb{E} \left[ 2||Z_n||_2^2 + 2||Z||_2^2 - ||Z_n - Z||_2^2 \right] \ge \mathbb{E} \left[ \liminf_{n \to \infty} \left( 2||Z_n||_2^2 + 2||Z||_2^2 - ||Z_n - Z||_2^2 \right) \right] = 4 \mathbb{E} ||Z||_2^2.
\end{align*}
Since $\lim_{n \to \infty} \mathbb{E} ||Z_n||_2^2 = \mathbb{E} ||Z||_2^2$, we must have $\lim_{n \to \infty} \mathbb{E} ||Z_n - Z||_2^2 = 0$, which implies that
\begin{equation*}
    \lim_{n \to \infty} \gamma(\mu^{(j)}, \mu_n^{(j)}) = 0\ a.s. \qedhere
\end{equation*}
\end{proof}

Having proved the above four lemmas, we are ready to prove Theorem \ref{thm:consistency}, which states the following: 

Assume that all $\mu^{(j)}$ have finite second moments that are bounded by a constant $h$:
\begin{equation*}
    \max_{1 \le j \le k} \int ||x||_2^2 d\mu^{(j)}(x) \le h.
\end{equation*}
Let $\mathbf{I}_n$ and $\mathbf{c}_n$ be a $\delta_n$-minimizer of the empirical risk. If $\lim_{n \to \infty} \delta_n = 0$, then 
\begin{equation*}
    \lim_{n \to \infty} W(\mathbf{I}_n, \mathbf{c}_n, \mu) = W^*(\mu)\ a.s.
\end{equation*}

\begin{proof}
Let $\varepsilon > 0$ be arbitrary, and let $\mathbf{I}^*$ and $\mathbf{c}^*$ be any element satisfying 
\begin{equation}\label{eq:proof1}
    \inf_\mathbf{I} \inf_\mathbf{c} W(\mathbf{I}, \mathbf{c}, \mu) \le W(\mathbf{I}^*, \mathbf{c}^*, \mu) < \inf_\mathbf{I} \inf_\mathbf{c} W(\mathbf{I}, \mathbf{c}, \mu) + \varepsilon.
\end{equation}
Then 
\begin{align*}
    W(\mathbf{I}_n, \mathbf{c}_n, \mu) - W^*(\mu) &= W(\mathbf{I}_n, \mathbf{c}_n, \mu) - \inf_\mathbf{I} \inf_\mathbf{c} W(\mathbf{I}, \mathbf{c}, \mu) \\
    &\le W(\mathbf{I}_n, \mathbf{c}_n, \mu) - (W(\mathbf{I}^*, \mathbf{c}^*, \mu) - \varepsilon) \\
    &= W(\mathbf{I}_n, \mathbf{c}_n, \mu) - W(\mathbf{I}_n, \mathbf{c}_n, \mu_n) + W(\mathbf{I}_n, \mathbf{c}_n, \mu_n) - W(\mathbf{I}^*, \mathbf{c}^*, \mu) + \varepsilon \\
    &\le W(\mathbf{I}_n, \mathbf{c}_n, \mu) - W(\mathbf{I}_n, \mathbf{c}_n, \mu_n) + (W(\mathbf{I}^*, \mathbf{c}^*, \mu_n) + \delta_n) - W(\mathbf{I}^*, \mathbf{c}^*, \mu) + \varepsilon  \\
    &\le |W(\mathbf{I}_n, \mathbf{c}_n, \mu) - W(\mathbf{I}_n, \mathbf{c}_n, \mu_n)| + |W(\mathbf{I}^*, \mathbf{c}^*, \mu_n) - W(\mathbf{I}^*, \mathbf{c}^*, \mu)| + \delta_n + \varepsilon.
\end{align*}

We now further analyze the right hand side of the last inequality:
\begin{equation}\label{eq:proof2}
    W(\mathbf{I}_n, \mathbf{c}_n, \mu) - W^*(\mu) \le |W(\mathbf{I}_n, \mathbf{c}_n, \mu) - W(\mathbf{I}_n, \mathbf{c}_n, \mu_n)| + |W(\mathbf{I}^*, \mathbf{c}^*, \mu_n) - W(\mathbf{I}^*, \mathbf{c}^*, \mu)| + \delta_n + \varepsilon.
\end{equation}

For the first term $|W(\mathbf{I}_n, \mathbf{c}_n, \mu) - W(\mathbf{I}_n, \mathbf{c}_n, \mu_n)|$, recall that $\mathbf{I}_n$ and $\mathbf{c}_n$ is a $\delta_n$-minimizer of the empirical risk. This means that for each $I_n^{(j)} \in \mathbf{I}_n$, the corresponding $c_n^{(j)} \in \mathbf{c}_n$ is selected to minimize $\int ||x(I_n^{(j)}) - c_n^{(j)}||_{dn}^2 d\mu_n^{(j)}(x)$, and it can be written as $c_n^{(j)} = \int x(I_n^{(j)}) d\mu_n^{(j)}(x)$. Note that for each $I_n^{(j)} \in \mathbf{I}_n$ and the corresponding $c_n^{(j)} \in \mathbf{c}_n$, we have
\begin{equation*}
    \int ||x(I_n^{(j)}) - c_n^{(j)}||_{dn}^2 d\mu_n^{(j)}(x) \le \int ||x(I_n^{(j)}) - c_n^{(j)}||_2^2 d\mu_n^{(j)}(x) \le \int ||x - b_n^{(j)}||_2^2 d\mu_n^{(j)}(x),
\end{equation*}
where $b_n^{(j)} = \int x d\mu_n^{(j)}(x)$. Since $\mu_n^{(j)}$ converges to $\mu^{(j)}$ almost surely, by the strong law of large numbers, we know that 
\begin{equation*}
    b_n^{(j)} = \int x d\mu_n^{(j)}(x) \overset{a.s.}{\to} \int x d\mu^{(j)}(x) = b_j,
\end{equation*}
and
\begin{equation*}
     \int ||x - b_n^{(j)}||_2^2 d\mu_n^{(j)}(x) \overset{a.s.}{\to} \int ||x - b_j||_2^2 d\mu^{(j)}(x).
\end{equation*}
Similarly, for each $I_n^{(j)} \in \mathbf{I}_n$ and the corresponding $c_n^{(j)} \in \mathbf{c}_n$, we have
\begin{equation*}
    \int ||x(I_n^{(j)}) - c_n^{(j)}||_{dn}^2 d\mu^{(j)}(x) \le \int ||x(I_n^{(j)}) - c_n^{(j)}||_2^2 d\mu^{(j)}(x) \le \int ||x - b_n^{(j)}||_2^2 d\mu^{(j)}(x),
\end{equation*}
and
\begin{equation*}
     \int ||x - b_n^{(j)}||_2^2 d\mu^{(j)}(x) \overset{a.s.}{\to} \int ||x - b_j||_2^2 d\mu^{(j)}(x).
\end{equation*}
Note that 
\begin{equation*}
    \int ||x - b_j||_2^2 d\mu^{(j)}(x) \le \int ||x||_2^2 d\mu^{(j)}(x) \le h.
\end{equation*}
Therefore, for each $j$ we can select $N_j$ such that for all $n \ge N_j$, with probability 1 we have
\begin{equation*}
    \int ||x(I_n^{(j)}) - c_n^{(j)}||_{dn}^2 d\mu_n^{(j)}(x) \le \int ||x - b_n^{(j)}||_2^2 d\mu_n^{(j)}(x) \le 2h,
\end{equation*}
and
\begin{equation*}
    \int ||x(I_n^{(j)}) - c_n^{(j)}||_{dn}^2 d\mu^{(j)}(x) \le \int ||x - b_n^{(j)}||_2^2 d\mu^{(j)}(x) \le 2h.
\end{equation*}
Let $N_0 = \max_{1 \le j \le k} N_j$, then for all $n \ge N_0$, with probability 1 we have
\begin{equation*}
    \max_{I_n^{(j)} \in \mathbf{I}_n, c_n^{(j)} \in \mathbf{c}_n} \left( \int ||x(I_n^{(j)}) - c_n^{(j)}||_{dn}^2 d\mu^{(j)}(x), \int ||x(I_n^{(j)}) - c_n^{(j)}||_{dn}^2 d\mu_n^{(j)}(x) \right) \le 2h.
\end{equation*}
Using Lemma \ref{lem:riskbound3}, for all $n \ge N_0$, with probability 1 we have
\begin{equation}\label{eq:proof3}
    |W(\mathbf{I}_n, \mathbf{c}_n, \mu) - W(\mathbf{I}_n, \mathbf{c}_n, \mu_n)| \le 2k\sqrt{2h} \max_{1 \le j \le k} \gamma(\mu^{(j)}, \mu_n^{(j)}) + 2kh \max_{1 \le j \le k} \left| p^{(j)} - p_n^{(j)} \right|.
\end{equation}

For the second term $|W(\mathbf{I}^*, \mathbf{c}^*, \mu_n) - W(\mathbf{I}^*, \mathbf{c}^*, \mu)|$, for each $I_j \in \mathbf{I}^*$, we can write the corresponding $c_j \in \mathbf{c}^*$ as $c_j = \int x(I_j) d\mu^{(j)}(x)$ in order to minimize $W(\mathbf{I}^*, \mathbf{c}^*, \mu)$. Similar to the steps in bounding the first term, for each $I_j \in \mathbf{I}^*$ and the corresponding $c_j \in \mathbf{c}^*$, we have
\begin{equation*}
    \int ||x(I_j) - c_j||_{dn}^2 d\mu_n^{(j)}(x) \le \int ||x(I_j) - c_j||_2^2 d\mu_n^{(j)}(x) \le \int ||x - b_j||_2^2 d\mu_n^{(j)}(x) \overset{a.s.}{\to} \int ||x - b_j||_2^2 d\mu^{(j)}(x),
\end{equation*}
and
\begin{equation*}
    \int ||x(I_j) - c_j||_{dn}^2 d\mu^{(j)}(x) \le \int ||x(I_j) - c_j||_2^2 d\mu^{(j)}(x) \le \int ||x - b_j||_2^2 d\mu^{(j)}(x).
\end{equation*}
Therefore, for each $j$ we can select $M_j$ such that for all $n \ge M_j$, with probability 1 we have
\begin{equation*}
    \int ||x(I_j) - c_j||_{dn}^2 d\mu_n^{(j)}(x) \le \int ||x - b_j||_2^2 d\mu_n^{(j)}(x) \le 2h,
\end{equation*}
and
\begin{equation*}
    \int ||x(I_j) - c_j||_{dn}^2 d\mu^{(j)}(x) \le \int ||x - b_j||_2^2 d\mu^{(j)}(x) \le 2h.
\end{equation*}
Let $M_0 = \max_{1 \le j \le k} M_j$, then for all $n \ge M_0$, with probability 1 we have
\begin{equation*}
    \max_{I_j \in \mathbf{I}^*, c_j \in \mathbf{c}^*} \left( \int ||x(I_j) - c_j||_{dn}^2 d\mu^{(j)}(x), \int ||x(I_j) - c_j||_{dn}^2 d\mu_n^{(j)}(x) \right) \le 2h.
\end{equation*}
Using Lemma \ref{lem:riskbound3}, for all $n \ge M_0$, with probability 1 we have
\begin{equation}\label{eq:proof4}
    |W(\mathbf{I}^*, \mathbf{c}^*, \mu_n) - W(\mathbf{I}^*, \mathbf{c}^*, \mu)| \le 2k\sqrt{2h} \max_{1 \le j \le k} \gamma(\mu^{(j)}, \mu_n^{(j)}) + 2kh \max_{1 \le j \le k} \left| p^{(j)} - p_n^{(j)} \right|.
\end{equation}

Combining the inequalities (\ref{eq:proof2}), (\ref{eq:proof3}), (\ref{eq:proof4}), for all $n \ge \max(N_0, M_0)$, with probability 1 we have
\begin{equation*}
    W(\mathbf{I}_n, \mathbf{c}_n, \mu) - W^*(\mu) \le 4k\sqrt{2h} \max_{1 \le j \le k} \gamma(\mu^{(j)}, \mu_n^{(j)}) + 4kh \max_{1 \le j \le k} \left| p^{(j)} - p_n^{(j)} \right| + \delta_n + \varepsilon.
\end{equation*}
Using Lemma \ref{lem:riskbound4}, we know that $\max_{1 \le j \le k} \gamma(\mu^{(j)}, \mu_n^{(j)}) \overset{a.s.}{\to} 0$, and $\max_{1 \le j \le k} \left| p^{(j)} - p_n^{(j)} \right| \overset{a.s.}{\to} 0$. Since $\varepsilon$ is arbitrary and $\lim_{n \to \infty} \delta_n = 0$, we have
\begin{equation*}
    W(\mathbf{I}_n, \mathbf{c}_n, \mu) - W^*(\mu) \overset{a.s.}{\to} 0. 
    \qedhere
\end{equation*}
\end{proof}

\begin{remark}
If we slightly modify inequality (\ref{eq:proof1}) and choose $\varepsilon_n$ such that $\lim_{n \to \infty} \varepsilon_n = 0$, and let $\mathbf{I}_n^*$ and $\mathbf{c}_n^*$ be any element satisfying 
\begin{equation}\label{eq:proof5}
    \inf_\mathbf{I} \inf_\mathbf{c} W(\mathbf{I}, \mathbf{c}, \mu) \le W(\mathbf{I}_n^*, \mathbf{c}_n^*, \mu) < \inf_\mathbf{I} \inf_\mathbf{c} W(\mathbf{I}, \mathbf{c}, \mu) + \varepsilon_n.
\end{equation}
Then, similar to inequality (\ref{eq:proof2}), we could obtain
\begin{equation}\label{eq:proof6}
    W(\mathbf{I}_n, \mathbf{c}_n, \mu) - W^*(\mu) \le |W(\mathbf{I}_n, \mathbf{c}_n, \mu) - W(\mathbf{I}_n, \mathbf{c}_n, \mu_n)| + |W(\mathbf{I}_n^*, \mathbf{c}_n^*, \mu_n) - W(\mathbf{I}_n^*, \mathbf{c}_n^*, \mu)| + \delta_n + \varepsilon_n.
\end{equation}
Also, similar to inequality (\ref{eq:proof4}), we could prove that for all $n \ge M_0$, with probability 1 we have
\begin{equation}\label{eq:proof7}
    |W(\mathbf{I}_n^*, \mathbf{c}_n^*, \mu_n) - W(\mathbf{I}_n^*, \mathbf{c}_n^*, \mu)| \le 2k\sqrt{2h} \max_{1 \le j \le k} \gamma(\mu^{(j)}, \mu_n^{(j)}) + 2kh \max_{1 \le j \le k} \left| p^{(j)} - p_n^{(j)} \right|.
\end{equation}
Combining the inequalities (\ref{eq:proof3}), (\ref{eq:proof6}), (\ref{eq:proof7}), for all $n \ge \max(N_0, M_0)$, with probability 1 we have
\begin{equation*}
    W(\mathbf{I}_n, \mathbf{c}_n, \mu) - W^*(\mu) \le 4k\sqrt{2h} \max_{1 \le j \le k} \gamma(\mu^{(j)}, \mu_n^{(j)}) + 4kh \max_{1 \le j \le k} \left| p^{(j)} - p_n^{(j)} \right| + \delta_n + \varepsilon_n.
\end{equation*}
Now, if we assume there exist $\beta_n$ such that for $1 \le j \le k$, we have
\begin{equation}\label{eq:proof8}
    \lim_{n \to \infty} \beta_n |p^{(j)} - p_n^{(j)}| < \infty \text{ a.s., and } \lim_{n \to \infty} \beta_n \gamma(\mu^{(j)}, \mu_n^{(j)}) < \infty \text{ a.s.} 
\end{equation}
Then, as long as we choose $\delta_n$ and $\varepsilon_n$ such that $\lim_{n \to \infty} \beta_n \delta_n < \infty$ and $\lim_{n \to \infty} \beta_n \varepsilon_n < \infty$, we have
\begin{align*}
    & \lim_{n \to \infty} \beta_n \left[ W(\mathbf{I}_n, \mathbf{c}_n, \mu) - W^*(\mu) \right] \\
    &\le 4k\sqrt{2h} \max_{1 \le j \le k} \lim_{n \to \infty} \beta_n \gamma(\mu^{(j)}, \mu_n^{(j)}) + 4kh \max_{1 \le j \le k} \lim_{n \to \infty} \beta_n \left| p^{(j)} - p_n^{(j)} \right| + \lim_{n \to \infty} \beta_n \delta_n + \lim_{n \to \infty} \beta_n \varepsilon_n \\
    &\le \infty \text{ a.s.}
\end{align*}
There are a few choices for $\beta_n$ that satisfy inequality (\ref{eq:proof8}), depending on the assumptions on $\mu_j$ and $p_j$. The earliest result is by \citet{ajtai1984optimal} for the Lebesgue measure which was later sharpened by \citet{dobric1995} to $\beta_n = n^{1/p}$. Theorem 11.1.6 of \citet{rachev1991probability} generalizes \citet{dudley1969speed} to show that under a metric entropy condition, we could let $\beta_n = n^{2/p}$. \citet{horowitz1994mean} proved that under some very weak assumptions, we could let $\beta_n = n^{2/(p+4)}$. Some refinements were provided in \citet{fournier2015rate} and \citet{weed2019sharp}.
\end{remark}

\section{Algorithm}\label{sec:algorithm}
In this section, we present a simple algorithm that outputs the $k$ disjoint subsets of features $I_1, \ldots, I_k$. The idea is to first transpose the $n \times p$ data matrix and then use an adapted version of the $k$-means clustering algorithm to produce a partition of the $p$ rows $I_1, \ldots, I_k$. The algorithm works as shown in Algorithm \ref{algorithm1}.

\begin{algorithm}[ht]
\begin{enumerate}
    \item Start by transposing the $n \times p$ data matrix and then performing $k$-means clustering on the rows to obtain $k$ clusters as the initial partition of the $p$ rows $I_1, \ldots, I_k$.
    \item We use an alternating procedure to update $I_1, \ldots, I_k$. It is quite similar to the Lloyd's algorithm in $k$-means clustering, except that the cluster centers $m_j$ are defined on $\mathbb{R}^{S_j}$ instead of $\mathbb{R}^{n}$, and the distance function is induced by the dimensionality-normalized norm instead of the Euclidean norm: 
    \begin{enumerate}
        \item (Update step) Given row partitions $I_1, \ldots, I_k$, update the cluster centers $m_1, \ldots, m_k$ by
        \begin{equation*}
            m_j = \frac{1}{|I_j|} \sum_{i \in I_j} T_i(S_j), 1 \le j \le k,
        \end{equation*}
        where $T_i$ denotes the $i$-th row of the transposed $p \times n$ data matrix. 
        \item (Assignment step) Given cluster centers $m_1, \ldots, m_k$, update the row partitions $I_1, \ldots, I_k$ by assigning every row to the cluster center with the smallest distance (induced by the dimensionality-normalized norm), and all the rows that are closest to $m_j$ form $I_j, 1 \le j \le k$.
    \end{enumerate}
    Alternate between (a) and (b) until convergence, and obtain a partition of the $p$ rows $I_1, \ldots, I_k$.
\end{enumerate}
\caption{Finding the $k$ disjoint subsets of features $I_1, \ldots, I_k$ (no feature selection)}
\label{algorithm1}
\end{algorithm}

Recall that $S_j$ denote the set of indices of training samples that belong to class $j$. After we have obtained the $k$ disjoint subsets of features $I_1, \ldots, I_k$, we can compute the $k$ disjoint centroids that represent the $k$ classes using the following equation:
\begin{equation}\label{ndc_centroid}
    c_j = \frac{1}{|S_j|} \sum_{i \in S_j} X_i(I_j),\ 1 \le j \le k.
\end{equation}
To classify a new data point, we simply choose the class with the nearest disjoint centroid (distance induced by the dimensionality-normalized norm).

In practice, it is recommended to run our algorithm multiple times to produce multiple partitions, and choose the partition that results in the lowest training error. There are mainly two reasons:
\begin{enumerate}
    \item First, our algorithm depends on the partition obtained from the initial $k$-means clustering, which itself is not deterministic and might provide different results in different runs. Therefore, running our algorithm multiple times increases the probability of finding a partition that gives better performance.
    \item Second, just like $k$-means clustering algorithm, our algorithm also might encounter empty cluster problem, although the probability is small when $k$ is much smaller than $p$. More specifically, if in any assignment step any group of row indices $I_j, 1 \le j \le k$ becomes empty, then the algorithm cannot proceed and need to restart.
\end{enumerate}

\section{Feature Selection}\label{sec:selection}
In this section, we consider extending Algorithm \ref{algorithm1} to perform feature selection. So far, although our method partitions features into disjoint subsets, it still uses all the features to classify new data points. However, in many applications such as gene expression analysis, the data usually include thousands of genes, many of which can be considered as irrelevant to predicting certain types of cancer. In those situations, it would be desirable if our method could perform feature selection, namely selecting a subset of the features so that only those features are used in making predictions.  

In order to enable our method to perform feature selection, we need to make some small adjustments to Algorithm \ref{algorithm1}. More specifically, we add a new special cluster $I_0$, and the features that belong to this special cluster are not used in prediction. Its cluster center $m_0$ are defined on $\mathbb{R}^{S_0}$, and we define $S_0 = \{1, \ldots, n\}$. Intuitively, this cluster center represents a ``global'' baseline that is based on all data points, as opposed to other cluster centers $m_j$ that represent class-specific baselines that are based on data points specific to class $j$. When classifying new data points, we still choose the class with the nearest disjoint centroid among $c_1, \ldots, c_k$ (distance induced by the dimensionality-normalized norm), which in turn depends on features in $I_1, \ldots, I_k$. In this way, the features in $I_0$ are not used at all in making predictions, and therefore our method can be considered as performing feature selection.

\begin{algorithm}
\begin{enumerate}
    \item Start by transposing the $n \times p$ data matrix and then performing $k$-means clustering on the rows to obtain $k+1$ clusters as the initial partition of the $p$ rows $I_0, \ldots, I_k$.
    \item We use an alternating procedure to update $I_0, \ldots, I_k$. It is quite similar to the Lloyd's algorithm in $k$-means clustering, except that the cluster centers $m_j$ are defined on $\mathbb{R}^{S_j}$ instead of $\mathbb{R}^{n}$, and the distance function is induced by the dimensionality-normalized norm instead of the Euclidean norm: 
    \begin{enumerate}
        \item (Update step) Given row partitions $I_0, \ldots, I_k$, update the cluster centers $m_0, \ldots, m_k$ by
        \begin{equation*}
            m_j = \frac{1}{|I_j|} \sum_{i \in I_j} T_i(S_j), 0 \le j \le k,
        \end{equation*}
        where $T_i$ denotes the $i$-th row of the transposed $p \times n$ data matrix. 
        \item (Assignment step) Given cluster centers $m_0, \ldots, m_k$, update the row partitions $I_0, \ldots, I_k$ by assigning every row to the cluster center with the smallest distance (induced by the dimensionality-normalized norm), and all the rows that are closest to $m_j$ form $I_j, 0 \le j \le k$. Note that the distance to $m_0$ is multiplied by a factor $\lambda$, which is a tuning parameter.
    \end{enumerate}
    Alternate between (a) and (b) until convergence, and obtain a partition of the $p$ rows $I_0, \ldots, I_k$.
\end{enumerate}
\caption{Finding the $k+1$ disjoint subsets of features $I_0, \ldots, I_k$ (with feature selection)}
\label{algorithm2}
\end{algorithm}

The modified algorithm that is able to perform feature selection works as shown in Algorithm \ref{algorithm2}. As a way to control the number of selected features, we introduce a tuning parameter $\lambda$. When computing the distances (induced by the dimensionality-normalized norm) to the clusters centers in the assignment step, the distance to $m_0$ is multiplied by a factor $\lambda$. When $\lambda = \infty$, the distance to $m_0$ will become $\infty$, so no feature will be assigned to the special cluster $I_0$, which means that all features will be selected. As $\lambda$ gets smaller and smaller, the distance to $m_0$ will become smaller and smaller, so features are more and more likely to be assigned to the special cluster $I_0$, which means that less and less features are getting selected. In practice, $\lambda$ can be considered as a hyperparameter that needs to be tuned based on the data, because it affects both the number of selected features and the performance of the model. 

After we have obtained the $k+1$ disjoint subsets of features $I_0, \ldots, I_k$, the rest of the process is exactly the same as before: compute the $k$ disjoint centroids $c_1, \ldots, c_k$ based on $I_1, \ldots, I_k$ using Equation (\ref{ndc_centroid}), and classify a new data point to the class with the nearest disjoint centroid (distance induced by the dimensionality-normalized norm). Noticeably, the features in $I_0$ are not involved in the computation of the distances to the $k$ disjoint centroids $c_1, \ldots, c_k$, therefore they are not used in prediction.

\section{Simulation Studies}\label{sec:simulation}
In this section, we evaluate and compare the performance of seven classification methods on simulated data with different settings. The first three classification methods are all based on nearest centroid:
\begin{enumerate}
    \item Nearest disjoint centroid (NDC): This is the method presented in this paper. We consider both versions of our nearest disjoint centroid method, with and without feature selection. Algorithm \ref{algorithm1} (NDC) is the version without feature selection, and Algorithm \ref{algorithm2} (NDC-S) is the version with feature selection. For both algorithms, we run 100 times and pick the best partition, as suggested at the end of Section \ref{sec:algorithm}.
    \item Nearest centroid (NC): This method simply classifies every data point to the class with the nearest centroid, and each centroid is defined as the average of the data points that belong to each class.
    \item Nearest shrunken centroid (NSC) \citep{tibshirani2002diagnosis}: This method is a simple modification of the nearest centroid method. It shrinks the class centroids toward the overall centroid after standardizing by the within-class standard deviation. The shrinkage process achieves feature selection.
\end{enumerate}
In addition, we also include the following four widely used classification methods: $k$-nearest neighbors (KNN) \citep{cover1967nearest}, linear discriminant analysis (LDA) \citep{fisher1936use}, support vector machine (SVM) \citep{cortes1995support}, and logistic regression with $L_1$ regularization (Logistic) \citep{hastie2009elements}. The evaluation metric is the misclassification rate. The number of neighbors in KNN is set to be 15. Other hyperparameters, including the $\lambda$ in NDC-S, the threshold $\Delta$ in NSC, and the $\lambda$ in logistic regression with $L_1$ regularization, are chosen via cross-validation on the training set.

We generate simulated data in four different settings. In all settings, the data matrix satisfies the following property:
\begin{enumerate}
    \item The rows have $k$ blocks, each of size $n$. They represent $k$ classes, each consisting of $n$ data points.
    \item The columns also have $k$ blocks, each of size $d$. They represent $k$ groups of features, each of size $d$.
    \item As a whole, the data matrix consists of $k \times k$ blocks, each of size $n \times d$.
    \item All the entries in the data matrix are independent. In addition, the entries in each small $n \times d$ block are identically distributed. The entries in the $k$ blocks on the main diagonal of the data matrix follow $\mathcal{N}(\mu_1, \sigma_1^2)$, and all the other entries in the data matrix follow $\mathcal{N}(\mu_2, \sigma_2^2)$.
\end{enumerate}
In all settings, we set $k = 4$ and $n = 250$. The rest of the parameters, including $d$, $\mu_1$, $\mu_2$, $\sigma_1$, $\sigma_2$, might vary in different settings. For each setting, we perform 50 simulations, and report the means and standard errors of the misclassification rates. In each simulation, we generate two data matrices, one as training set and the other as test set.

\subsection{Simulation 1: Blocks with Different Means and the Same Variance}
In the first simulation, we consider the case where the $k \times k$ blocks have different means and the same variance. More specifically, we set $\mu_1 = a$, and $\mu_2 = 0$, where $a \in \{0.3, 0.6, 0.9\}$. As $a$ increases, the difference between the means in different blocks also increases. In addition, we set $\sigma_1 = \sigma_2 = 1$, which means that all entries have the same standard deviation of 1. We also set $d \in \{3, 5, 10\}$. As $d$ increases, the number of features in each block also increases.

\begin{table}[ht]
\centering
% \resizebox{\columnwidth}{!}{
\begin{tabular}{ c c c c c c c c c }
    \hline
    & NDC & NDC-S & NC & NSC & KNN & LDA & SVM & Logistic \\
    \hline
    & & & & $d = 3$ & & & & \\
    \hline
    $a = 0.3$ & 0.736 & 0.738 & 0.611 & 0.614 & 0.678 & 0.612 & 0.635 & 0.609 \\
    $a = 0.6$ & 0.669 & 0.686 & 0.447 & 0.448 & 0.511 & 0.448 & 0.467 & 0.445 \\
    $a = 0.9$ & 0.542 & 0.586 & 0.286 & 0.286 & 0.332 & 0.288 & 0.306 & 0.286 \\
    \hline
    & & & & $d = 5$ & & & & \\
    \hline
    $a = 0.3$ & 0.731 & 0.737 & 0.570 & 0.573 & 0.651 & 0.571 & 0.590 & 0.569 \\
    $a = 0.6$ & 0.626 & 0.648 & 0.350 & 0.351 & 0.433 & 0.354 & 0.373 & 0.351 \\
    $a = 0.9$ & 0.475 & 0.520 & 0.178 & 0.180 & 0.227 & 0.182 & 0.196 & 0.182 \\
    \hline
    & & & & $d = 10$ & & & & \\
    \hline
    $a = 0.3$ & 0.724 & 0.730 & 0.488 & 0.491 & 0.607 & 0.494 & 0.507 & 0.489 \\
    $a = 0.6$ & 0.564 & 0.609 & 0.210 & 0.211 & 0.301 & 0.217 & 0.224 & 0.214 \\
    $a = 0.9$ & 0.346 & 0.490 & 0.058 & 0.058 & 0.087 & 0.063 & 0.066 & 0.064 \\
    \hline
\end{tabular}
% }
\caption{The means of the misclassification rate for Simulation 1 over 50 simulations. Most of the standard errors are less than 0.003, and the largest standard error is 0.011.}
\label{tab:sim1}
\end{table}

Results are reported in Table \ref{tab:sim1}. In this setting, we see that NC, NSC, LDA, and Logistic have extremely similar and also the smallest misclassification rates, followed closely by SVM and KNN, and finally NDC and NDC-S with significantly larger misclassification rates. In addition, we observe a general pattern that as $d$ and $a$ increase, the misclassification rates decrease. This pattern makes intuitive sense, because larger $d$ means more features, and larger $a$ means larger difference between the means in different blocks, both of which should improve the performance of classification methods.

It is important to point out that our method (NDC and NDC-S) performing worse than other competing classification methods, including NC and NSC which are directly comparable, is expected in this setting. The reason is that all the features provide useful signals by having different means across different classes, and there is no benefit in considering features separately since they all have the same variance. Our method defines centroids using disjoint subsets of features, thereby losing valuable information compared to NC and NSC, both of which define centroids using all the features. However, when different features have different variances, our method would perform much better that all other classification methods, as we will see in the following simulations.

\subsection{Simulation 2: Blocks with Different Variances and the Same Mean}
In the second simulation, we consider the case where the $k \times k$ blocks have different variances and the same mean. More specifically, we set $\sigma_1 = 1$, and $\sigma_2 = 1+b$, where $b \in \{0.3, 0.6, 0.9\}$. As $b$ increases, the difference between the variances in different blocks also increases. In addition, we set $\mu_1 = \mu_2 = 0$, which means that all entries have the same mean of 0. We also set $d \in \{3, 5, 10\}$. As $d$ increases, the number of features in each block also increases.

\begin{table}[ht]
\centering
% \resizebox{\columnwidth}{!}{
\begin{tabular}{ c c c c c c c c c }
    \hline
    & NDC & NDC-S & NC & NSC & KNN & LDA & SVM & Logistic \\
    \hline
    & & & & $d = 3$ & & & & \\
    \hline
    $b = 0.3$ & 0.699 & 0.696 & 0.748 & 0.748 & 0.707 & 0.747 & 0.680 & 0.749 \\
    $b = 0.6$ & 0.528 & 0.610 & 0.739 & 0.738 & 0.638 & 0.739 & 0.583 & 0.746 \\
    $b = 0.9$ & 0.357 & 0.396 & 0.739 & 0.737 & 0.570 & 0.739 & 0.501 & 0.750 \\
    \hline
    & & & & $d = 5$ & & & & \\
    \hline
    $b = 0.3$ & 0.583 & 0.638 & 0.749 & 0.747 & 0.705 & 0.750 & 0.675 & 0.750 \\
    $b = 0.6$ & 0.358 & 0.378 & 0.747 & 0.743 & 0.630 & 0.747 & 0.555 & 0.747 \\
    $b = 0.9$ & 0.229 & 0.246 & 0.740 & 0.736 & 0.555 & 0.741 & 0.458 & 0.748 \\
    \hline
    & & & & $d = 10$ & & & & \\
    \hline
    $b = 0.3$ & 0.426 & 0.446 & 0.746 & 0.742 & 0.700 & 0.744 & 0.686 & 0.748 \\
    $b = 0.6$ & 0.189 & 0.201 & 0.743 & 0.739 & 0.623 & 0.743 & 0.583 & 0.749 \\
    $b = 0.9$ & 0.075 & 0.080 & 0.736 & 0.730 & 0.547 & 0.738 & 0.480 & 0.744 \\
    \hline
\end{tabular}
% }
\caption{The means of the misclassification rate for Simulation 2 over 50 simulations. Most of the standard errors are less than 0.003, and the largest standard error is 0.008.}
\label{tab:sim2}
\end{table}

\newpage

Results are reported in Table \ref{tab:sim2}. In this setting, we see that NC, NSC, LDA, and Logistic also have extremely similar but the worst performance, with misclassification rates around 0.75 (equivalent to random guessing) in all cases. These numbers indicate that NC, NSC, LDA, and Logistic are all unable to detect heteroskedastic structure in the data, regardless of the value of $d$ and $b$. SVM and KNN perform slightly better, although their misclassification rates are still much larger than those of NDC and NDC-S, both of which clearly outperform all other competing classification methods.

The reason that our method performs well in this setting lies in the fact that different features have different variances across different classes. Although different features have the same mean so the centroids have the same value, by defining centroids using disjoint subsets of features, different variances across different classes lead to different distances to different centroids. In addition, Corollary \ref{cor:minimizer} guarantees that when $\sigma_1 < \sigma_2$, we can obtain the appropriate disjoint subsets of features.

\subsection{Simulation 3: Blocks with Different Means and Different Variances}
In the third simulation, we consider the case where the $k \times k$ blocks have different means and different variances, which is a combination of the first and second case. More specifically, we set $\mu_1 = c, \sigma_1 = 1$, and $\mu_2 = 0, \sigma_2 = 1+c$, where $c \in \{0.3, 0.6, 0.9\}$. As $c$ increases, the difference between the means and variances in different blocks also increases. We also set $d \in \{3, 5, 10\}$. As $d$ increases, the number of features in each block also increases.

\begin{table}[ht]
\centering
% \resizebox{\columnwidth}{!}{
\begin{tabular}{ c c c c c c c c c }
    \hline
    & NDC & NDC-S & NC & NSC & KNN & LDA & SVM & Logistic \\
    \hline
    & & & & $d = 3$ & & & & \\
    \hline
    $c = 0.3$ & 0.670 & 0.680 & 0.664 & 0.666 & 0.668 & 0.664 & 0.630 & 0.666 \\
    $c = 0.6$ & 0.466 & 0.537 & 0.579 & 0.583 & 0.540 & 0.579 & 0.496 & 0.579 \\
    $c = 0.9$ & 0.293 & 0.394 & 0.511 & 0.514 & 0.439 & 0.513 & 0.397 & 0.510 \\
    \hline
    & & & & $d = 5$ & & & & \\
    \hline
    $c = 0.3$ & 0.551 & 0.613 & 0.633 & 0.637 & 0.649 & 0.635 & 0.596 & 0.633 \\
    $c = 0.6$ & 0.301 & 0.324 & 0.511 & 0.516 & 0.490 & 0.513 & 0.426 & 0.508 \\
    $c = 0.9$ & 0.162 & 0.181 & 0.412 & 0.412 & 0.360 & 0.416 & 0.311 & 0.408 \\
    \hline
    & & & & $d = 10$ & & & & \\
    \hline
    $c = 0.3$ & 0.388 & 0.425 & 0.563 & 0.566 & 0.612 & 0.567 & 0.535 & 0.564 \\
    $c = 0.6$ & 0.136 & 0.146 & 0.378 & 0.381 & 0.393 & 0.388 & 0.326 & 0.378 \\
    $c = 0.9$ & 0.037 & 0.041 & 0.251 & 0.252 & 0.239 & 0.261 & 0.201 & 0.250 \\
    \hline
\end{tabular}
% }
\caption{The means of the misclassification rate for Simulation 3 over 50 simulations. Most of the standard errors are less than 0.003, and the largest standard error is 0.009.}
\label{tab:sim3}
\end{table}

Results are reported in Table \ref{tab:sim3}. In this setting, we see that in general, the ranking of different classification methods is similar to the ranking in Simulation 2: NDC and NDC-S clearly have the best performance, followed by SVM and KNN, and NC, NSC, LDA, and Logistic have similar but the worst performance. Comparing the results in Table \ref{tab:sim3} to those in Table \ref{tab:sim1}, we notice that the additional difference between the variances in different blocks significantly helps NDC and NDC-S, leading to much smaller misclassification rates. In contrast, the misclassification rates of all other classification methods increase significantly after introducing the additional difference between block variances. Importantly, in this simulation, larger $c$ means larger difference between both the means and the variances in different blocks, so both kinds of signals are present in the data. In this situation, NDC and NDC-S outperform all other competing classifiers, which indicates that our method could potentially obtain competitive performance when dealing with complex datasets in the real world.

\subsection{Simulation 4: Adding Irrelevant Features}
In the fourth simulation, we study the impact of adding irrelevant features on different classification methods. More specifically, we fix $d=5$, and the first 20 columns of the data matrix is the same as the data matrix in Simulation 3, where we set $\mu_1 = c, \sigma_1 = 1$, $\mu_2 = 0, \sigma_2 = 1+c$, and $c \in \{0.3, 0.6, 0.9\}$. However, the remaining $r$ columns of the data matrix are $r$ irrelevant features consisting of i.i.d.\ standard Gaussian variables, where $r \in \{20, 40, 80\}$.

\begin{table}[ht]
\centering
% \resizebox{\columnwidth}{!}{
\begin{tabular}{ c c c c c c c c c }
    \hline
    & NDC & NDC-S & NC & NSC & KNN & LDA & SVM & Logistic \\
    \hline
    & & & & $r = 20$ & & & & \\
    \hline
    $c = 0.3$ & 0.548 & 0.580 & 0.633 & 0.634 & 0.668 & 0.642 & 0.625 & 0.632 \\
    $c = 0.6$ & 0.349 & 0.330 & 0.512 & 0.514 & 0.519 & 0.527 & 0.490 & 0.512 \\
    $c = 0.9$ & 0.229 & 0.168 & 0.414 & 0.414 & 0.386 & 0.431 & 0.387 & 0.412 \\
    \hline
    & & & & $r = 40$ & & & & \\
    \hline
    $c = 0.3$ & 0.571 & 0.582 & 0.639 & 0.640 & 0.682 & 0.651 & 0.643 & 0.638 \\
    $c = 0.6$ & 0.380 & 0.322 & 0.511 & 0.511 & 0.545 & 0.536 & 0.514 & 0.511 \\
    $c = 0.9$ & 0.262 & 0.164 & 0.411 & 0.414 & 0.406 & 0.445 & 0.412 & 0.412 \\
    \hline
    & & & & $r = 80$ & & & & \\
    \hline
    $c = 0.3$ & 0.602 & 0.608 & 0.647 & 0.643 & 0.696 & 0.665 & 0.659 & 0.645 \\
    $c = 0.6$ & 0.415 & 0.310 & 0.521 & 0.515 & 0.572 & 0.560 & 0.540 & 0.513 \\
    $c = 0.9$ & 0.302 & 0.162 & 0.418 & 0.417 & 0.439 & 0.467 & 0.442 & 0.415 \\
    \hline
\end{tabular}
% }
\caption{The means of the misclassification rate for Simulation 4 over 50 simulations. Most of the standard errors are less than 0.003, and the largest standard error is 0.005.}
\label{tab:sim4a}
\end{table}

The misclassification rates are reported in Table \ref{tab:sim4a}. Comparing to the results for $d=5$ in Table \ref{tab:sim3}, we see that NDC-S, NC, NSC, and Logistic seem to be only minimally affected by the presence of irrelevant features. However, other classification methods, including NDC, KNN, LDA, and SVM, are all noticeably affected by the inclusion of irrelevant features, and their misclassification rates further increase as $r$ increases. In particular, the difference between the behaviors of NDC and NDC-S in this setting demonstrates that by including feature selection as part of the algorithm, NDC-S becomes much more robust to the presence of irrelevant features. For datasets in the real world, it is often the case that some of the features are irrelevant, and therefore NDC-S might be a better default choice to use on real-world data.

\begin{table}[ht]
\centering
% \resizebox{\columnwidth}{!}{
\begin{tabular}{ c c c c c c c c c }
    \hline
    & NDC & NDC-S & NC & NSC & KNN & LDA & SVM & Logistic \\
    \hline
    & & & & $r = 20$ & & & & \\
    \hline
    $c = 0.3$ & 40(0) & 31(2) & 40(0) & 32(1) & 40(0) & 40(0) & 40(0) & 33(0) \\
    $c = 0.6$ & 40(0) & 24(1) & 40(0) & 27(1) & 40(0) & 40(0) & 40(0) & 34(0) \\
    $c = 0.9$ & 40(0) & 21(1) & 40(0) & 23(1) & 40(0) & 40(0) & 40(0) & 35(0) \\
    \hline
    & & & & $r = 40$ & & & & \\
    \hline
    $c = 0.3$ & 60(0) & 45(3) & 60(0) & 41(2) & 60(0) & 60(0) & 60(0) & 41(1) \\
    $c = 0.6$ & 60(0) & 22(2) & 60(0) & 34(2) & 60(0) & 60(0) & 60(0) & 43(1) \\
    $c = 0.9$ & 60(0) & 21(1) & 60(0) & 31(2) & 60(0) & 60(0) & 60(0) & 45(1) \\
    \hline
    & & & & $r = 80$ & & & & \\
    \hline
    $c = 0.3$ & 100(0) & 80(5) & 100(0) & 54(4) & 100(0) & 100(0) & 100(0) & 51(1) \\
    $c = 0.6$ & 100(0) & 23(2) & 100(0) & 39(4) & 100(0) & 100(0) & 100(0) & 57(1) \\
    $c = 0.9$ & 100(0) & 20(0) & 100(0) & 34(4) & 100(0) & 100(0) & 100(0) & 61(1) \\
    \hline
\end{tabular}
% }
\caption{The means (and standard errors) of the number of selected features for Simulation 4 over 50 simulations.}
\label{tab:sim4b}
\end{table}

To validate our hypothesis that the feature selection part of NDC-S is working as intended, we also compute the means and standard errors of the number of selected features for different classification methods, and the results are reported in Table \ref{tab:sim4b}. As we can see, other than NDC-S, NSC, and Logistic, the remaining five classification methods always use all the features, because they are not capable of performing feature selection. Comparing the feature selection of NDC-S, NSC, and Logistic, we could argue that in general NDC-S has the best performance. This is because for $c = 0.6$ and $c = 0.9$, regardless of the number of irrelevant features $r$, NDC-S always select close to 20 features, which is exactly the number of relevant features in the data.

\section{Applications}\label{sec:applications}
In this section, we apply our method to three gene expression datasets, all of which were proposed and preprocessed by \citet{de2008clustering}. In all three datasets, the rows represent different samples of tissues, and the columns represent different genes. The samples have already been labeled with different classes based on their types of tissue. We evaluate and compare the performance of the same eight classification methods: nearest disjoint centroid classifier without feature selection (NDC), nearest disjoint centroid classifier with feature selection (NDC-S), nearest centroid classifier (NC), nearest shrunken centroid classifier (NSC), $k$-nearest neighbors (KNN), linear discriminant analysis (LDA), support vector machine (SVM), and logistic regression with $L_1$ regularization (Logistic). We perform 3-fold cross validation on the datasets, and report the means and standard errors of the misclassification rates.

In addition to misclassification rates, we also consider whether the classifiers can perform feature selection, and if yes, how many features are selected. We know that NDC, NC, KNN, LDA, and SVM require all the features to perform classification. However, a varying number of features can be selected by changing the threshold $\Delta$ in NSC, or changing the $\lambda$ in NDC-S or Logistic. Those hyperparameters are selected by nested cross-validation to achieve the smallest misclassification rate on each fold, and we also report the means and the standard errors of the number of features selected by each classifier.

\subsection{Breast and Colon Cancer Gene Expression Dataset}
The first dataset consists of 104 samples and 182 genes. There are two types of samples: 62 samples correspond to breast cancer tissues, and 42 samples correspond to colon cancer tissues.

\begin{table}[ht]
\centering
\resizebox{\columnwidth}{!}{
\begin{tabular}{ c c c c c c c c }
    \hline
    NDC & NDC-S & NC & NSC & KNN & LDA & SVM & Logistic \\
    \hline
    0.029(0.029) & 0.019(0.010) & 0.183(0.041) & 0.125(0.038) & 0.087(0.001) & 0.058(0.017) & 0.048(0.009) & 0.019(0.010) \\
    \hline
\end{tabular}
}
\caption{The means (and standard errors) of the misclassification rates on the breast and colon cancer gene expression dataset.}
\label{tab:bc_errors}
\end{table}

The misclassification rates are reported in Table \ref{tab:bc_errors}. As we can see, NDC-S and Logistic have the best performance, followed closely in turn by NDC, SVM, LDA, and KNN. Finally, NC and NSC have the worst performance, with misclassification rates over 12\%. In particular, the small standard errors indicate that the difference between the performance of our method (NDC and NDC-S) and the two directly comparable classifiers (NC and NSC) is quite significant. One possible reason that our method performs well on this dataset is that there is a natural interpretation for the disjoint features that our method produced: two disjoint groups of genes that are useful for identifying breast and colon cancer, respectively. Since breast cancer and colon cancer are two completely different types of cancer, it is quite possible that the genes that are useful in predicting one type of cancer are largely irrelevant to predicting another type of cancer. Therefore, our nearest disjoint centroid classifiers, which identify two disjoint sets of genes that are used in predicting the two types of cancer, perform better than the nearest centroid classifier and the nearest shrunken centroid classifier, both of which rely on the same set of genes to predict the two types of cancer and thus might incorporate more noisy and irrelevant information.

\begin{table}[ht]
\centering
% \resizebox{\columnwidth}{!}{
\begin{tabular}{ c c c c c c c c }
    \hline
    NDC & NDC-S & NC & NSC & KNN & LDA & SVM & Logistic \\
    \hline
    182(0) & 90(1) & 182(0) & 138(24) & 182(0) & 182(0) & 182(0) & 14(3) \\
    \hline
\end{tabular}
% }
\caption{The means (and standard errors) of the number of selected features on the breast and colon cancer gene expression dataset.}
\label{tab:bc_features}
\end{table}

The number of selected features are reported in Table \ref{tab:bc_features}. For this dataset, logistic regression with $L_1$ regularization only selects 14 features on average (among the three models built for the three folds), which is surprisingly small considering it achieves less than 2\% misclassification rate with less than 8\% of the features. However, comparing NDC-S and NSC, we see that NDC-S also achieve less than 2\% misclassification rate while selecting 90 features on average (around 49\% of the features), whereas NSC achieve more than 12\% misclassification rate while selecting 138 features on average (around 76\% of the features).

\subsection{Leukemia Gene Expression Dataset}
The second dataset consists of 72 samples and 1868 genes. There are two types of samples: 47 samples correspond to acute myeloid leukemia, and 25 samples correspond to acute lymphoblastic leukemia.

\begin{table}[ht]
\centering
\resizebox{\columnwidth}{!}{
\begin{tabular}{ c c c c c c c c }
    \hline
    NDC & NDC-S & NC & NSC & KNN & LDA & SVM & Logistic \\
    \hline
    0.056(0.014) & 0.028(0.014) & 0.056(0.028) & 0.028(0.014) & 0.153(0.077) & 0.167(0.042) & 0.208(0.087) & 0.181(0.091) \\
    \hline
\end{tabular}
}
\caption{The means (and standard errors) of the misclassification rates on the leukemia gene expression dataset.}
\label{tab:leukemia_errors}
\end{table}

The misclassification rates are reported in Table \ref{tab:leukemia_errors}. As we can see, all four centroid-based classification methods (NDC, NDC-S, NC, NSC) achieve misclassification rates that are less than 6\%, whereas the other four classification methods (KNN, LDA, SVM, Logistic) perform significantly worse, with misclassification rates over 15\%.

\begin{table}[ht]
\centering
% \resizebox{\columnwidth}{!}{
\begin{tabular}{ c c c c c c c c }
    \hline
    NDC & NDC-S & NC & NSC & KNN & LDA & SVM & Logistic \\
    \hline
    1868(0) & 51(17) & 1868(0) & 1868(0) & 1868(0) & 1868(0) & 1868(0) & 10(3) \\
    \hline
\end{tabular}
% }
\caption{The means (and standard errors) of the number of selected features on the leukemia gene expression dataset.}
\label{tab:leukemia_features}
\end{table}

The number of selected features are reported in Table \ref{tab:leukemia_features}. For this dataset, it is worth noting that despite having equally good performance in terms of misclassification rates, NDC-S only selects 51 features on average (around 3\% of the features), where NSC requires all the features. This is an example where only our NDC-S algorithm could give stellar performance in both classification and feature selection, whereas other competing classifiers could perform well in one aspect at most.

\subsection{Breast Cancer Gene Expression Dataset}
The third dataset consists of 49 samples and 1198 genes. There are two types of samples: 25 samples correspond to breast tumors that are estrogen-receptor-positive, and 24 samples correspond to breast tumors that are estrogen-receptor-negative.

\begin{table}[ht]
\centering
\resizebox{\columnwidth}{!}{
\begin{tabular}{ c c c c c c c c }
    \hline
    NDC & NDC-S & NC & NSC & KNN & LDA & SVM & Logistic \\
    \hline
    0.183(0.032) & 0.145(0.043) & 0.206(0.057) & 0.164(0.043) & 0.186(0.064) & 0.224(0.019) & 0.384(0.104) & 0.163(0.019) \\
    \hline
\end{tabular}
}
\caption{The means (and standard errors) of the misclassification rates on the breast cancer gene expression dataset.}
\label{tab:breast_errors}
\end{table}

The misclassification rates are reported in Table \ref{tab:breast_errors}. As we can see, our NDC-S algorithm again achieves the smallest misclassification rate on this dataset, although the difference between the misclassification rates of most of the classifiers is not that significant after taking the standard errors into consideration.

\begin{table}[ht]
\centering
% \resizebox{\columnwidth}{!}{
\begin{tabular}{ c c c c c c c c }
    \hline
    NDC & NDC-S & NC & NSC & KNN & LDA & SVM & Logistic \\
    \hline
    1198(0) & 15(3) & 1198(0) & 445(226) & 1198(0) & 1198(0) & 1198(0) & 12(5) \\
    \hline
\end{tabular}
% }
\caption{The means (and standard errors) of the number of selected features on the breast cancer gene expression dataset.}
\label{tab:breast_features1}
\end{table}

The number of selected features are reported in Table \ref{tab:breast_features1}. For this dataset, we notice that both NDC-S and Logistic are surprisingly efficient at identifying relevant features, selecting 15 and 12 features on average (around 1\% of the features), respectively. In contrast, NSC selects 445 features on average (around 37\% of the features). This shows that our NDC-S algorithm is able to achieve the smallest misclassification rate with as few as 15 features (on average) out of 1198 features.

\begin{table}[ht]
\centering
\begin{tabular}{c c c}
    \hline
    Name & Normalized Frequency & Description \\
    \hline 
    X80062\_at & 0.96 & SA mRNA \\
    29610\_s\_at & 0.80 & GYPE Glycophorin E \\
    X57129\_at & 0.66 & HISTONE H1D\\
    X02958\_at & 0.60 & Interferon alpha gene IFN-alpha 6 \\ 
    X17025\_at & 0.57 & Homolog of yeast IPP isomerase \\
    \hline
\end{tabular}
\caption{The top five most frequently selected genes in the breast cancer gene expression dataset.}
\label{tab:breast_features2}
\end{table}

Since the selected features are genes that might be biologically related to breast cancer, we decide to run the experiment 100 times and compute the normalized frequency of genes that get selected by our algorithm. The top five most frequently selected genes, their normalized frequencies, and their descriptions are listed in Table \ref{tab:breast_features2}. Noticeably, the first two genes, named ``X80062\_at'' and ``29610\_s\_at'', get selected by our algorithm 96\% and 80\% of the time, respectively. This suggests that the biological relationship between these two genes and breast cancer might be worthy of further investigation.

\section{Discussion}\label{sec:discussion}
In this paper, we have developed a new classification method based on nearest centroid, and it is called the nearest disjoint centroid classifier. The two main differences between our nearest disjoint centroid classifier and the nearest centroid classifier is: (1) the centroids are defined based on disjoint subsets of features instead of all the features, and (2) the distance is induced by the dimensionality-normalized norm instead of the Euclidean norm. We have presented and proved a few theoretical results regarding our method. In addition, we have proposed a simple algorithm based on adapted $k$-means clustering that can find the disjoint subsets of features used in our method, and extended the algorithm to perform feature selection by making a few small adjustments. We have evaluated and compared the performance of our method to other classifiers on both simulated data and real-world gene expression datasets. The results have demonstrated that in many situations, our nearest disjoint centroid classifier is able to outperform other competing classifiers by having smaller misclassification rates and/or using fewer features.

In the future, we plan to explore different ways of utilizing the disjoint subsets of features and the associated centroids obtained by our method. In this paper we focused on one simple and straightforward way to perform classification: classify a new data point to the class with the nearest disjoint centroid. However, there are many other methods that could be adapted to using disjoint subsets of features instead of all the features. For example, we could fit a (multinomial) logistic regression model based on the distances from every data point to the $k$ disjoint centroids. We could also define distances from a test data point to a training data point based on the training data point's class and the associated subset of features. Therefore, it is also possible to develop a version of the $k$-nearest neighbors algorithm with disjoint subsets of features.

Another interesting direction to pursue is to consider different ways to obtain the $k$ subsets of features associated with the $k$ classes. In this paper we used an adapted version of the $k$-means clustering algorithm to find those $k$ subsets of features, which is simple but also restrictive: the $k$ subsets of features must be disjoint. In general, our method could still work even if there is intersection between the $k$ subsets of features. As a result, instead of performing $k$-way clustering on the features, we could consider performing two-way clustering on the features $k$ times, each time obtaining one group of features for one class. In the end, we would obtain $k$ groups of features, and they are not required to be disjoint. In addition, they are also not required to cover all the features, and the features that are not included in any of the $k$ groups are not used in prediction. This means that it could also perform feature selection, although controlling the number of selected features would require additional work.

\bibliography{ref}

\end{document}